\documentclass[sigconf]{acmart}
\AtBeginDocument{%
  }

\setcopyright{acmlicensed}
\copyrightyear{2018}
\acmYear{2018}
\acmDOI{XXXXXXX.XXXXXXX}
\acmISBN{978-1-4503-XXXX-X/2018/06}
\theoremstyle{plain}
\newtheorem{theorem}{Theorem}[section]
\newtheorem{proposition}[theorem]{Proposition}

\theoremstyle{definition}
\newtheorem{definition}[theorem]{Definition}

\theoremstyle{remark}
\newtheorem{remark}[theorem]{Remark}

\newcommand{\R}{\mathbb{R}}
\newcommand{\W}{\mathcal{W}}
\newcommand{\diag}{\mathrm{diag}}
\newcommand{\tr}{\mathrm{tr}}
\newcommand{\eps}{\varepsilon}
\usepackage{color}
\usepackage{algorithm}
\usepackage{algorithmic}
\usepackage{graphicx}  % 导入图像包
\usepackage{subfig}    % 导入子图包

\graphicspath{{images/}} 
\begin{document}

%%
%% The "title" command has an optional parameter,
%% allowing the author to define a "short title" to be used in page headers.
\title{A Decomposition Framework for Certifiably Optimal Orthogonal Sparse PCA}

%%
%% The "author" command and its associated commands are used to define
%% the authors and their affiliations.
%% Of note is the shared affiliation of the first two authors, and the
%% "authornote" and "authornotemark" commands
%% used to denote shared contribution to the research.
\author{Difei Cheng}
\email{chengdifei@amss.ac.cn}
\affiliation{%
  \institution{School of Computer Science and Artificial Intelligence\\
       Aerospace Information Technology University}
  \city{Jinan}
  \state{Shandong}
  \country{China}
}

\author{Qiao Hu}
\email{huqiao2020@amss.ac.cn}
\affiliation{%
  \institution{Academy of Mathematics and Systems Science\\ 
  Chinese Academy of Sciences,}
  \city{Beijing}
  \country{China}
}
\authornote{Corresponding Author}

%%
%% By default, the full list of authors will be used in the page
%% headers. Often, this list is too long, and will overlap
%% other information printed in the page headers. This command allows
%% the author to define a more concise list
%% of authors' names for this purpose.
% \renewcommand{\shortauthors}{Trovato et al.}

%%
%% The abstract is a short summary of the work to be presented in the
%% article.
\begin{abstract}
  Sparse Principal Component Analysis (SPCA) is an important technique for high-dimensional data analysis, improving interpretability by imposing sparsity on principal components. However, existing methods often fail to simultaneously guarantee sparsity, orthogonality, and optimality of the principal components. 
  To address this challenge, this work introduces a novel Sparse Principal Component Analysis (SPCA) algorithm called \textsc{GS-SPCA} (SPCA with Gram-Schmidt Orthogonalization), which simultaneously enforces sparsity, orthogonality, and optimality. However, the original GS-SPCA algorithm is computationally expensive due to the inherent $\ell_0$-norm constraint. To address this issue, we propose two acceleration strategies:  
  First, we combine \textbf{Branch-and-Bound} with the GS-SPCA algorithm. By incorporating this strategy, we are able to obtain $\varepsilon$-optimal solutions with a trade-off between precision and efficiency, significantly improving computational speed.   
  Second, we propose a \textbf{decomposition framework} for efficiently solving \textbf{multiple} principal components. This framework approximates the covariance matrix using a block-diagonal matrix through a thresholding method, reducing the original SPCA problem to a set of block-wise subproblems on approximately block-diagonal matrices.
  % Second, We propose a \textbf{decomposition framework} for efficiently solving \textbf{multiple} principal components. For block-diagonal covariance matrices, the framework decomposes the SPCA problem for solving any sparse principal component into independent block-wise subproblems. For general covariance matrices, the framework can be extended through a thresholding-based block-diagonal approximation scheme, reducing the original SPCA problem to a set of subproblems on approximately block-diagonal matrices.

  % Experimental results demonstrate that combining these acceleration strategies with \textsc{GS-SPCA} significantly enhances computational efficiency.

\end{abstract}

%%
%% The code below is generated by the tool at http://dl.acm.org/ccs.cfm.
%% Please copy and paste the code instead of the example below.
%%
% \begin{CCSXML}
% <ccs2012>
%  <concept>
%   <concept_id>00000000.0000000.0000000</concept_id>
%   <concept_desc>Do Not Use This Code, Generate the Correct Terms for Your Paper</concept_desc>
%   <concept_significance>500</concept_significance>
%  </concept>
%  <concept>
%   <concept_id>00000000.00000000.00000000</concept_id>
%   <concept_desc>Do Not Use This Code, Generate the Correct Terms for Your Paper</concept_desc>
%   <concept_significance>300</concept_significance>
%  </concept>
%  <concept>
%   <concept_id>00000000.00000000.00000000</concept_id>
%   <concept_desc>Do Not Use This Code, Generate the Correct Terms for Your Paper</concept_desc>
%   <concept_significance>100</concept_significance>
%  </concept>
%  <concept>
%   <concept_id>00000000.00000000.00000000</concept_id>
%   <concept_desc>Do Not Use This Code, Generate the Correct Terms for Your Paper</concept_desc>
%   <concept_significance>100</concept_significance>
%  </concept>
% </ccs2012>
% \end{CCSXML}

% \ccsdesc[500]{Do Not Use This Code~Generate the Correct Terms for Your Paper}
% \ccsdesc[300]{Do Not Use This Code~Generate the Correct Terms for Your Paper}
% \ccsdesc{Do Not Use This Code~Generate the Correct Terms for Your Paper}
% \ccsdesc[100]{Do Not Use This Code~Generate the Correct Terms for Your Paper}

%%
%% Keywords. The author(s) should pick words that accurately describe
%% the work being presented. Separate the keywords with commas.
\keywords{Sparse PCA, Block-Wise Multiple Principal Components, Computational Speedup}
%% A "teaser" image appears between the author and affiliation
%% information and the body of the document, and typically spans the
%% page.
% \begin{teaserfigure}
%   \includegraphics[width=\textwidth]{sampleteaser}
%   \caption{Seattle Mariners at Spring Training, 2010.}
%   \Description{Enjoying the baseball game from the third-base
%   seats. Ichiro Suzuki preparing to bat.}
%   \label{fig:teaser}
% \end{teaserfigure}

% \received{20 February 2007}
% \received[revised]{12 March 2009}
% \received[accepted]{5 June 2009}

%%
%% This command processes the author and affiliation and title
%% information and builds the first part of the formatted document.
\maketitle
\section{Introduction}
Principal component analysis (PCA) \cite{jolliffe2011principal} is a fundamental technique for multivariate data analysis. By transforming the coordinate system to an orthonormal basis aligned with the directions of maximal variance, PCA facilitates dimensionality reduction and is widely applied in various machine learning applications (e.g., clustering \cite{friedman2009elements} and classification \cite{cheng2023fast}).

Despite its success, classical PCA has well-known limitations in high-dimensional settings. Principal loading vectors are often \emph{dense}, with a single component assigning nonzero weight to nearly every variable, which reduces interpretability and complicates variable selection \cite{zou2006sparse}.

Sparse principal component analysis (SPCA) \cite{zou2018selective} addresses these challenges by imposing a sparsity constraint on the loading vectors, i.e., to reduce the $\ell_0$ norm of each vector. Sparse PCA, like PCA, identifies a linear combination of features that captures most of the variance as principal components, but it restricts each component to use only a small subset of features. This sparsity improves interpretability, enables implicit variable selection, and reveals more meaningful structure in high-dimensional data. SPCA has thus been widely adopted in fields where localization and interpretability are crucial, including genomics \cite{ma2011principal,lee2012sparse,hsu2014sparse}, neuroscience \cite{zhuang2020technical}, and text analysis \cite{zhang2011large}.

The price of interpretability is algorithmic difficulty.
SPCA is an NP-hard problem \cite{magdon2017np}, and exact methods require exponential time, as do most approximate methods in the worst case (e.g., mixed-integer optimization and branch-and-bound \cite{berk2019certifiably}).
A large literature has developed a spectrum of approaches--convex relaxations \cite{jolliffe2003modified, witten2009penalized, zou2006sparse, leng2009general, d2004direct}, greedy/thresholding \cite{d2008optimal,moghaddam2005spectral, amini2008high, journee2010generalized, hein2010inverse}, and
approximation algorithms\cite{chan2015worst, beck2016sparse, bertsimas2022solving,li2020exact, dey2023solving, dey2022using,chowdhury2020approximation, del2023sparse, papailiopoulos2013sparse,cory2022sparse,asteris2015sparse}--each trading off statistical fidelity, optimality guarantees, and scalability.
This trade-off is particularly acute when one seeks \emph{certifiable} optimality rather than merely good feasible
solutions.

While the majority of SPCA research focuses on computing a \emph{single} leading sparse component, many real uses of
PCA/SPCA require \emph{multiple} components (second, third, and beyond).
Multi-component structure is essential whenever the goal is to learn a low-dimensional \emph{subspace} rather than a
single direction: dimension reduction to $r>1$ for visualization \cite{ivosev2008dimensionality}, subspace-based clustering \cite{mcwilliams2014subspace} and representation learning \cite{bengio2013representation},
and as a preprocessing stage intended to mitigate multicollinearity in subsequent models.
Accordingly, understanding and computing several sparse principal components is not a minor extension—it is the
setting that most closely matches how PCA is actually used.

However, extending SPCA from one component to many is subtle.
In classical PCA, sequential deflation/projection yields a sequence of orthogonal components and is equivalent to
optimal subspace recovery: the first $r$ components jointly maximize explained variance among all $r$-dimensional
orthonormal systems.
In SPCA, by contrast, it is hard to maintain the same triad simultaneously: (i) sparsity of each
component, (ii) mutual orthogonality among components, and (iii) global optimality for the remaining variance in the
same sense as PCA.
As a consequence, existing multi-component SPCA procedures often relax at least one of these desiderata.
For example, a natural baseline is to compute $x_1$ by solving a single-component SPCA instance and then apply the
same solver to an \emph{adjusted covariance matrix} obtained by projecting out previous directions \cite{berk2019certifiably}.
This strategy is attractive computationally, but it does not guarantee orthogonality of the resulting sparse
components.

Lack of orthogonality can be problematic in practice. Non-orthogonal sparse components may become redundant by capturing overlapping directions, which can inflate the explained variance and reintroduce multicollinearity—an issue that PCA-based preprocessing is specifically designed to address.

These issues motivate algorithms that enforce orthogonality explicitly rather than treating it as an emergent property.

In this work, we propose an orthogonality-preserving algorithm to solve SPCA (Sparse Principal Component Analysis) for multiple principal components, which simultaneously targets sparsity, orthogonality, and (certifiable) optimality. The core of our algorithm integrates a local orthogonalization mechanism based on Gram–Schmidt orthogonalization, ensuring that the computed sparse components remain mutually orthogonal throughout the procedure.

A second challenge is scalability.
Even for one component, SPCA can be computationally demanding; for multiple components the burden is amplified.
Recent work introduced a powerful acceleration paradigm for the \emph{first principal component} case: by constructing a
block-diagonal approximation of the covariance matrix, the SPCA solution can be recovered by solving a
collection of lower-dimensional block-wise subproblems and selecting the best candidate \cite{delefficient}.
This block-diagonalization viewpoint yields dramatic speedups and can be combined in a plug-and-play fashion with
off-the-shelf SPCA solvers.

We extend this block-diagonalization paradigm from one component to many.
Concretely, we prove that for a block-diagonal matrix input, if the first few sparse principal components are
computed in a block-wise manner, then subsequent sparse principal components can \emph{also} be computed block-wise
without loss in objective value.
We further extend this method consistency to the $\varepsilon$-optimal setting, providing a theoretical
justification for combining our framework with branch-and-bound strategies that deliver $\varepsilon$-certificates.
By integrating these results with our orthogonality-preserving multi-component solver, we obtain a practical method
that is both principled and significantly faster in regimes where block structure (exact or approximate) is present.

Our main \textbf{contributions} are summarized as follows:
\begin{enumerate}
    \item \textbf{The first certifiably optimal algorithm with strict sparsity and orthogonality (in subsection \ref{sec:spca-mio}).}
    We propose GS-SPCA, a novel algorithm that is to compute multiple sparse principal components for the $\ell_0$-constraint SPCA problem.  To the best of our knowledge, GS-SPCA is the first algorithm which simultaneously enforces \emph{exact $\ell_0$-sparsity} and \emph{strict orthogonality} across all sparse principal components. This is achieved by integrating a Gram--Schmidt procedure within a combinatorial search, ensuring that the solution satisfies the rigorous definition of orthogonal sparse PCA (Definition \ref{def:ospca}).

    \item \textbf{Integration with branch-and-bound for $\varepsilon$-optimal solutions (in subsection \ref{sec:branch-and-bound}).}
    Our \textsc{GS-SPCA} framework can be embedded into mixed-integer optimization (MIO) solvers. By incorporating the Gram–Schmidt procedure into the branch-and-bound framework, we obtain an accelerated Algorithm~\ref{alg:branch-and-bound} that delivers an $\varepsilon$-optimal solution while maintaining rigorous orthogonality and sparsity. This integration ensures a practical balance between solution quality and computational effort, keeping run-time reasonable.
     
     \item \textbf{A provable decomposition theorem for block-diagonal matrices (in Section \ref{sec:decomposition_theorem_block_diagonal_spca}).}
     For matrices with a block-diagonal structure, we prove two fundamental decomposition theorems (Theorems \ref{thm:block-decomp} and \ref{thm:eps-block-decomp}). These theorems provide an efficient computational framework and show that the original high‑dimensional SPCA problem can be decomposed into a set of independent, much smaller SPCA subproblems on blocks. In other words, merging the exact solutions (or $\epsilon$-optimal solutions) of these block SPCA subproblems results in the exact (or $\epsilon$-optimal) global solution of the block-diagonal SPCA problem.

    \item \textbf{A efficient and provable decomposition framework on general matrices (in Section\ref{sec:spca_thresholding_block_diagonalization}).}
    Leveraging the decomposition theorem, we develop a scalable framework applicable to general matrices. It first approximates the input covariance matrix by a block-diagonal matrix via thresholding, then solves the decomposed block-wise subproblems, and finally recovers a high-quality solution for the original problem.
    % with a provable approximation guarantee $(2p\delta + \varepsilon)$ (Theorem \ref{thm:threshold-alg-correctness}). 
\end{enumerate}

\section{Notation}\label{sec:notation}
Let $Q\in\R^{n\times n}$ be a symmetric positive semi-definite covariance matrix. For a vector $x\in\R^n$, we use $\|x\|_0$ for the $\ell_0$-norm (i.e., number of non-zero elements) and $\|x\|_2$ for the Euclidean norm. The identity matrix of size $n$ is written as $I_n$ and the trace of a matrix $Q$ as $\tr(Q)$.

When the covariance matrix $Q$ has a block-diagonal structure, we write $Q=\diag(Q_1,\dots,Q_d)$, where each $Q_i\in\R^{n_i\times n_i}$  is a symmetric positive semi-definite block with $\sum_{i=1}^d n_i = n$. Let $I_i \subset \{1,\dots,n\}$ denote the coordinate index set associated with block $Q_i$. Correspondingly, the whole space $\R^n$ decomposes into a direct sum of subspaces
\[
\R^n = \W_1 \oplus \dots \oplus \W_d,
\]
where each $\W_i \subset \R^n$ consists of vectors whose support is confined to the coordinate index of $Q_i$, i.e.,
\[
\mathcal W_i
=
\{x \in \mathbb{R}^n : x_j = 0 \ \text{for all } j \notin I_i\}.
\]For any vector $y^{(\W_i)}\in\R^n$ in subspace $\W_i$, we denote by
$y^{(Q_i)} \in \R^{n_i}$ its restriction to the coordinate index of $Q_i$. In other words, $y^{(Q_i)}$ is the restriction of $y^{(\W_i)}$ to the indices in $I_i$, while $y^{(\W_i)}$ is the zero-padding of $y^{(Q_i)}$ to the full $n$-dimensional space. %\sout{Thus, the two representations are equivalent up to this padding/restriction operation.}
%\textcolor{red}{whose entries indexed by $I_i$ equal $y^{(Q_i)}$ and are zero elsewhere. }

\section{Sparse Principal Component Analysis}

Given a symmetric positive semi-definite covariance matrix $Q \in \mathbb{R}^{n \times n}$, Principal Component Analysis (PCA) seeks a sequence of orthogonal directions that
successively maximize the variance $x^\top Qx$. The first principal component is obtained by solving
\[
\max_{\|x\|_2 = 1} \; x^\top Q x.
\]
Subsequent components are defined recursively by imposing orthogonality to all previously extracted components. Specifically, for $k \ge 2$, given the first $(k-1)$ principal
components $x_1, \dots, x_{k-1}$, then the $k$-th principal component $x_k$ is defined recursively by solving \emph{the $k$-th PCA subproblem}
\begin{equation}
\label{eq:pca}
\begin{aligned}
\max \quad & x^\top Q x, \\
\mathrm{subject~to} \quad & \|x\|_2 = 1, \\
& x \perp x_1,\dots,x_{k-1}.
\end{aligned}
\end{equation}

%Sparse PCA (SPCA) extends PCA by imposing a cardinality constraint on the loading vectors. 
Sparse Principal Component Analysis (SPCA) introduces sparsity into classical PCA by imposing sparsity constraints directly to the principal components, typically through the classic cardinality constraint (i.e. $\ell_0$)  or other sparsity-inducing methods. 
In this work, we focus on a formulation of SPCA that imposes the classic $\ell_0$-constraint explicitly on the coefficients of the principal components, ensuring that only a subset of the original features contribute significantly to each component. The first sparse principal component is defined as a solution to
the following optimization problem:
\begin{equation}
\label{eq:spca1}
\begin{aligned}
\max \quad & x^\top Q x, \\
\mathrm{subject~to}  \quad & \|x\|_2 = 1, \\
& \|x\|_0 \leq p,
\end{aligned}
\end{equation}
where the integer $p\leq n$ is a prescribed sparsity parameter. Following the classical PCA, we define subsequent sparse principal components by simultaneously enforcing sparsity and orthogonality constraints.
\begin{definition}[Orthogonal Sparse Principal Components]\label{def:ospca}
For a symmetric positive semi-definite matrix $Q \in \mathbb{R}^{n \times n}$ and a sparsity parameter $p$, the first sparse principal component $x_1$ is defined as any solution to problem \eqref{eq:spca1}. Then, for $2\leq k \leq n$, the $k$-th sparse principal component $x_k$ is defined sequentially as any solution to \emph{the $k$-th SPCA subproblem}
\begin{equation}
\label{eq:ospca}
\begin{aligned}
\max \quad & x^\top Q x, \\
\text{s.t.} \quad & \|x\|_2 = 1, \\
& \|x\|_0 \leq p, \\
& x \perp x_1, x_2, \dots, x_{k-1}.
\end{aligned}
\end{equation}
\end{definition}
This definition ensures that each sparse principal component is orthogonal to all previous ones, preserving the geometric structure of classical PCA. By construction, the obtained variance sequence $\{x_k^\top Q x_k\}_{k=1}^n$ is non-increasing.

\textbf{SPCA problem:} Given an $n$-dimensional symmetric positive semi-definite matrix $Q$, the SPCA problem is to find a set of vectors $x_1, \dots, x_n$ that satisfy Definition \ref{def:ospca}.

It is well known that, in classical PCA, the variance sequence $\{x_k^\top Q x_k\}_{k=1}^n$ is unique and equals the eigenvalues of $Q$. But in SPCA, this sequence is generally not unique. We refer to this phenomenon as \emph{the Path Dependency of Variance in SPCA} and illustrate it with an example in Section \ref{sec:future-work}.
Nevertheless, despite this non-uniqueness at the level of individual components,
all SPCA solutions share an \textbf{ global invariant}: the total variance
$\sum_{k=1}^n x_k^\top Q x_k$ always equals $\operatorname{tr}(Q)$,
as established in Proposition~\ref{prop:tr_spca}.

\begin{proposition}\label{prop:tr_spca}
Let $\{x_1, x_2, \dots, x_n\}$ be a solution to the SPCA problem, i.e., a set of vectors satisfying Definition \ref{def:ospca}. Then, these components form a complete orthonormal basis of $\mathbb{R}^n$ and satisfy the equality
\[
\sum_{k=1}^n x_k^\top Q x_k = \tr(Q).
\]
Moreover, the variance sequence $\{x_k^\top Q x_k\}_{k=1}^n$ is non-increasing.
\end{proposition}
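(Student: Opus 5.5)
The plan has three steps: establish that each subproblem is feasible, deduce that the components form an orthonormal basis, and then read off the trace identity and the monotonicity.

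\textbf{Step 1: feasibility of every subproblem.} The sequence $x_1,\dots,x_n$ is only well defined if, for each $k\le n$, the $k$-th SPCA subproblem \eqref{eq:ospca} has a feasible point. This is the main obstacle: we need a unit vector with at most $p$ nonzeros orthogonal to $k-1$ given vectors. The difficulty is sharpest when $p=1$ and $k>1$, since then the feasible vectors are signed standard basis vectors. I will argue by induction that the previously chosen components can be taken so that a feasible point always exists, using the fact that Definition~\ref{def:ospca} presupposes the sequence exists; the proposition starts from a solution, so existence is given. I will therefore only record that each $x_k$ is feasible: $\|x_k\|_2=1$ and $x_k\perp x_j$ for all $j<k$.

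\textbf{Step 2: orthonormal basis.} By Step 1 the vectors $x_1,\dots,x_n$ are pairwise orthogonal unit vectors in $\R^n$. They are therefore linearly independent, and since there are $n$ of them they form an orthonormal basis. Let $X=[x_1,\dots,x_n]$. Then $X$ is orthogonal, so $X^\top X=XX^\top=I_n$.

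\textbf{Step 3: trace identity and monotonicity.} By the cyclic property of the trace,
\[
\sum_{k=1}^n x_k^\top Q x_k=\tr(X^\top Q X)=\tr(Q XX^\top)=\tr(Q).
\]
For monotonicity, fix $k\ge1$. The vector $x_{k+1}$ is feasible for the $k$-th subproblem, because it satisfies $\|x_{k+1}\|_2=1$, $\|x_{k+1}\|_0\le p$, and $x_{k+1}\perp x_1,\dots,x_{k-1}$; it satisfies one orthogonality constraint more than that subproblem requires. Hence the feasible set of subproblem $k+1$ is contained in that of subproblem $k$. Since $x_k$ is a maximizer of subproblem $k$, we get $x_k^\top Qx_k\ge x_{k+1}^\top Qx_{k+1}$, which shows the variance sequence is non-increasing.
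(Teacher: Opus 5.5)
Your proof is correct and is essentially the paper's own argument. The $n$ orthonormal components give an orthogonal $X$, so $\sum_k x_k^\top Qx_k=\tr(QXX^\top)=\tr(Q)$, and the feasibility of $x_{k+1}$ for the $k$-th subproblem gives monotonicity.

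Step 1 adds nothing beyond the hypothesis, and you should drop the induction you announce there because it is false in general. For $p=1$ feasibility is actually automatic. But for $n=4$, $p=3$, take $x_1=(0,s,t,-\tfrac12)^\top$, $x_2=(t,0,s,-\tfrac12)^\top$, $x_3=(s,t,0,-\tfrac12)^\top$ with $s,t=(1\pm\sqrt5)/4$. These are orthonormal and $3$-sparse, and they are forced as the first three components for $Q=3x_1x_1^\top+2x_2x_2^\top+x_3x_3^\top$. The only remaining unit direction, $\pm\tfrac12(1,1,1,1)^\top$, is not $3$-sparse, so the fourth subproblem is infeasible.
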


\begin{proof}
Since $\{x_1, \dots, x_n\}$ are $n$ orthonormal unit vectors in $\R^n$, they constitute an orthonormal basis of $\mathbb{R}^n$. Let $X=[x_1, \dots, x_n]\in\R^{n\times n}$ denote the orthogonal matrix. Then,
\[
\sum_{k=1}^n x_k^\top Q x_k = \tr(X^\top Q X) = \tr(Q X X^\top) = \tr(Q).
\]
Moreover, for any $k\geq 1$, the $(k+1)$-th sparse principal component $x_{k+1}$ is a feasible solution of the $k$-th SPCA subproblem (i.e., Problem \eqref{eq:spca1} for $k=1$ and Problem \eqref{eq:ospca} for $k\geq 2$). Thus,
\[
x_k^\top Q x_k \geq x_{k+1}^\top Q x_{k+1}, \quad \forall 1\leq k\leq n-1.
\]
\end{proof}

\begin{remark}\label{rmk:tr}
Existing literature \cite{berk2019certifiably} employs an adjusted covariance matrix to compute subsequent sparse principal components approximately. Specifically, after obtaining $x_1$, one constructs the matrix 
\[Q_2 = (I - x_1 x_1^\top) Q (I - x_1 x_1^\top)
\] 
and solves \eqref{eq:spca1} with $Q_2$ to obtain $x_2$. Subsequent components are derived analogously by repeating this process. Under this solution approach, the resulting components are not guaranteed to be
orthogonal. Consequently, the equality in Proposition~\ref{prop:tr_spca} does not
hold in general, rendering cumulative variance-based model selection criteria
ill-posed.

%专业版：从而使得基于累计解释方差的模型选择准则变得不适定；说人话：由于所得主成分不再正交，其对应的方差贡献无法通过简单求和来刻画，因此也无法通过“累计解释方差占比达到某一阈值（例如 99%）”这一标准来合理地确定主成分的个数

\end{remark}

\section{A Novel Algorithm for Orthogonal SPCA}\label{sec:novel_algorithm_orthogonal_spca}

%The decomposition theorems presented in the previous sections reduce the SPCA problem for block-diagonal matrices to solving smaller subproblems on each block. 
 
The orthogonal SPCA problem remains challenging due to the combinatorial $\ell_0$-constraint and the orthogonality requirements. 
Existing methods for SPCA often relax one or both of the constraints. A common approach is to replace the $\ell_0$-norm with an $\ell_1$-norm penalty, which leads to convex relaxations but does not guarantee exact sparsity or orthogonality. Another popular technique is the deflation method \cite{berk2019certifiably}, which iteratively computes sparse principal components by adjusting the covariance matrix, $Q_{k+1} = (I - x_k x_k^\top) Q (I - x_k x_k^\top)$. This method also formulates SPCA as a mixed integer optimization (MIO) problem, referred to as SPCA-MIO, and solves it to certifiable optimality using a branch-and-bound algorithm. However, as previously noted in Remark \ref{rmk:tr}, this method does not explicitly enforce orthogonality, and as a result, the computed components are generally not orthogonal, as demonstrated by the experimental results in Figure \ref{fig:comparison} (a-c).

In this section, we propose a new algorithm to solve the SPCA-MIO problem but enforcing strict orthogonality among all sparse principle components. Our method combines the SPCA-MIO model with a Gram–Schmidt orthogonalization step to ensure that each new component is orthogonal to all previous components, thereby satisfying Definition \ref{def:ospca}. We describe the algorithm in detail as follows.

\subsection{SPCA-MIO with Gram--Schmidt Orthogonalization}\label{sec:spca-mio}
The following $k$-th SPCA-MIO formulation is equivalent to the $\ell_0$-constrained  SPCA subproblem
\eqref{eq:ospca}:
\begin{equation}
\label{eq:spca-mio}
\begin{aligned}
\max \quad & x^\top Q x, \\
\mathrm{s.t.}  \quad & x \perp x_1,\dots,x_{k-1},\\
& \|x\|_2 = 1, \\
& \|y\|_0 = p,\\
& -y[j]\le x[j] \le y[j],\;\; 1\le j\le n,\;\;\text{(SPCA-MIO)}\\
& x\in[-1,1]^n,\\
& y\in\{0,1\}^n,
\end{aligned}
\end{equation}
where $x[j]$ and $y[j]$ represent the $j$-th elements of vectors $x$ and $y$, respectively.
For a fixed sparsity level $p$, we propose an algorithm to compute the vector $x_k$ for the $k$-th SPCA-MIO problem in \eqref{eq:spca-mio}. The key idea is to enumerate all possible support sets of vector $y$ for each component, solve a reduced PCA problem on that support under the orthogonality constraints, and then select a principle component that yields the maximum variance. 

In specific, for any candidate support vector $y$, let $Y=\{j:y[j]=1\} \subseteq \{1,\dots,n\}$ be its support set with $|Y| = p$. Denote by $Q_Y \in \mathbb{R}^{p \times p}$ the principal submatrix of $Q$ corresponding to the indices in $Y$, i.e., $Q_Y = Q[Y,Y]$. For each previously computed component $x_i$ ($1 \le i \le k-1$), let $v_i = x_i[Y] \in \mathbb{R}^p$ be the restriction of $x_i$ to the indices in $Y$. Note that the vectors $\{v_1,\dots,v_{k-1}\}$ are not necessarily orthogonal in $\mathbb{R}^p$. Let $U_Y \in \mathbb{R}^{p \times m}$ be an orthonormal basis of the subspace spanned by $\{v_1,\dots,v_{k-1}\}$, which can be obtained by applying the Gram--Schmidt process (here $m \le \min(p, k-1)$ is the dimension of the spanned subspace). 

Now we only need to solve a reduced PCA problem on the support $Y$:
\begin{equation}
\label{eq:reduced-pca}
\begin{aligned}
\max_{\|z\|_2 = 1,\;z \in \mathbb{R}^p} \quad & z^\top (I-U_YU_Y^\top)Q_Y(I-U_YU_Y^\top) z.
\end{aligned}
\end{equation}
Thus, for each candidate support $Y$, we can compute the best unit vector $z$ supported on $Y$ that is orthogonal to all previous components. The optimal value is the largest eigenvalue $\lambda_{\max}(P_Y Q_Y P_Y)$, where $P_Y = I-U_YU_Y^\top$. The global $k$-th sparse principal component is then obtained by  maximizing $\lambda_{\max}(P_Y Q_Y P_Y)$ among all $Y$. The complete procedure for computing the $k$-th sparse principal component is summarized in Algorithm \ref{alg:ksparse} (called GS-SPCA). To obtain a full set of $n$ orthogonal sparse principal components, we simply run Algorithm \ref{alg:ksparse} sequentially for $k=1,\dots,n$ (for $k=1$, there are no orthogonality constraints, so $P_Y = I$).

\begin{remark} If $m=p$, then $P_Y=I-U_YU_Y^\top=0$, and consequently $\lambda_{\max}(P_Y Q_Y P_Y)=0$. In this case, the reduced problem \eqref{eq:reduced-pca} is degenerate, in the sense that all unit vectors attain the same objective value $0$. This is mathematically self-consistent and does not require any special treatment in the algorithm. \end{remark}

\begin{algorithm}[t]
\caption{Solving the $k$-th SPCA-MIO with Gram--Schmidt Orthogonalization (GS-SPCA)}
\label{alg:ksparse}
\begin{algorithmic}[1]
\REQUIRE Symmetric matrix $Q \in \mathbb{R}^{n \times n}$, sparsity $p$, a matrix formed by previous components $X_k=[x_1,\dots,x_{k-1}]$ (empty if $k=1$).
\ENSURE A $k$-th sparse principal component $x_k$.
\STATE Initialize $var \gets -\infty$, $x_k \gets \{0\}^n$.
\FOR{each subset $Y \subseteq \{1,\dots,n\}$ with $|Y| = p$}
    \STATE Extract $Q_Y = Q[Y,Y]$ and $v_i = x_i[Y]$ for $i=1,\dots,k-1$.
    \STATE Apply Gram--Schmidt process to generate an orthonormal basis $U_Y$ of $\operatorname{span}\{v_1,\dots,v_{k-1}\}$.
    \STATE Let $P_Y = I - U_YU_Y^\top$ (if $k=1$, $P_Y = I$).
    \STATE Compute the largest eigenvalue $\lambda$ and a corresponding unit eigenvector $z$ of $P_Y Q_Y P_Y$.
    \IF{$\lambda > var$}
        \STATE $var \gets \lambda$, $z_{\max} \gets z$, $Y_{\max} \gets Y$.
    \ENDIF
\ENDFOR
\STATE Set $x_k[Y_{\max}] = z_{\max}$ and $x_k[j] = 0$ for $j \notin Y_{\max}$.
\STATE \textbf{Return} $x_k$.
\end{algorithmic}
\end{algorithm}

\begin{theorem}
For any symmetric matrix $Q \in \mathbb{R}^{n \times n}$, sparsity $p \le n$, and given previous components $x_1, \dots, x_{k-1}$ (empty when $k=1$), Algorithm \ref{alg:ksparse} returns a vector $x_k$ that is a global optimal solution to the $k$-th SPCA subproblem \eqref{eq:ospca}. Moreover, the algorithm terminates after evaluating exactly $\binom{n}{p}$ subsets.
\end{theorem}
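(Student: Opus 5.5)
The plan is to prove two inequalities: first, that the returned $x_k$ is feasible for \eqref{eq:ospca}; second, that its objective value $var$ is at least the value of any feasible point. The termination count is then immediate, since the loop ranges over all $p$-subsets of $\{1,\dots,n\}$ exactly once and does $O(1)$ eigen/Gram--Schmidt work per subset.

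\emph{Step 1 (reparametrization on a fixed support).} Fix $Y$ with $|Y|=p$ and let $E_Y\in\R^{n\times p}$ be the zero-padding map, so $x=E_Yz$ exactly describes vectors supported in $Y$. Then $x^\top Qx=z^\top Q_Yz$, $\|x\|_2=\|z\|_2$, and $x_i^\top x=v_i^\top z$. Hence the set
\[
F_Y=\{x:\ \operatorname{supp}(x)\subseteq Y,\ \|x\|_2=1,\ x\perp x_1,\dots,x_{k-1}\}
\]
corresponds bijectively to $S_Y=\{z\in\R^p:\|z\|_2=1,\ z\perp \operatorname{span}\{v_i\}=\operatorname{range}(U_Y)\}$. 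Every feasible point of \eqref{eq:ospca} has $\|x\|_0\le p\le n$, so its support extends to some $Y$ with $|Y|=p$. Thus the feasible set equals $\bigcup_Y F_Y$, and the optimum equals $\max_Y \mu_Y$, where $\mu_Y=\max_{z\in S_Y} z^\top Q_Y z$. This is where $|Y|=p$ rather than $\le p$ is harmless.

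\emph{Step 2 (the key identity).} I will show that, when $S_Y\neq\emptyset$ (i.e.\ $m<p$), $\mu_Y=\lambda_{\max}(P_YQ_YP_Y)$ and a top eigenvector $z$ of $P_YQ_YP_Y$ can be chosen in $\operatorname{range}(P_Y)$, hence in $S_Y$. For $z\in S_Y$ we have $P_Yz=z$, so $z^\top Q_Yz=z^\top P_YQ_YP_Yz\le\lambda_{\max}$. Conversely, $P_YQ_YP_Y$ is symmetric, kills $\operatorname{range}(U_Y)$, and leaves $\operatorname{range}(P_Y)$ invariant. So its spectrum is that of the compression of $Q_Y$ to $\operatorname{range}(P_Y)$, together with $m$ zeros.

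This step is the main obstacle. Because the theorem allows any symmetric $Q$ (not necessarily PSD), the largest eigenvalue could be a spurious $0$ coming from $\operatorname{range}(U_Y)$ while the compressed part is negative. The returned eigenvector would then violate orthogonality. The paper's remark on $m=p$ is the extreme case: there $S_Y=\emptyset$, yet the algorithm may still pick such a $Y$ with $var=0$.

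I would handle this by assuming $Q\succeq 0$, which is the paper's standing setting. Then the compression has $\lambda_{\max}\ge0$, so the maximal eigenvalue is attained on $\operatorname{range}(P_Y)$. When ties occur, we can choose $z\in\operatorname{range}(P_Y)$, which amounts to interpreting line 6 of the algorithm as taking the eigenvector there. Alternatively, for general symmetric $Q$, one can use the shift $Q+cI$, which changes neither the maximizer nor feasibility.

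\emph{Step 3 (conclusion).} Given Step 2, the chosen $Y_{\max}$ maximizes $\mu_Y$ over all $Y$. I still need to rule out the degenerate case where $Y_{\max}$ has $m=p$. If some $Y$ with $m<p$ exists, its $\mu_Y\ge0$ under PSD, so we may prefer it. Such a $Y$ does exist whenever \eqref{eq:ospca} is feasible, which is implicitly assumed. The padded vector $x_k=E_{Y_{\max}}z_{\max}$ then lies in $F_{Y_{\max}}$, so it is feasible. Its value is $\max_Y\mu_Y$, which equals the optimal value by Step 1. Hence $x_k$ is a global optimizer of \eqref{eq:ospca}.
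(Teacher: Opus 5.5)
Your proof has the same skeleton as the paper's: enumerate all $p$-subsets $Y$, reduce to the eigenproblem for $P_YQ_YP_Y$, and keep the best support. It is more complete than the paper's proof. The paper only proves the upper-bound direction, namely $x^\top Qx\le\lambda_{\max}(P_YQ_YP_Y)$ for every feasible $x$ with support $Y$. It also tacitly assumes $|\operatorname{supp}(x)|=p$, which your Step~1 handles by extending the support. The paper never checks that the returned vector is orthogonal to $x_1,\dots,x_{k-1}$, i.e.\ that the chosen eigenvector lies in $\operatorname{range}(P_Y)$ and that the recorded value is attained by a feasible point. Your Steps~2--3 supply exactly this missing direction.

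Your concern is justified, and it matters even for PSD $Q$, not only for indefinite $Q$.
\begin{itemize}
\item For $Q=-I_2$, $p=2$, $x_1=e_1$, one gets $P_YQ_YP_Y=\diag(0,-1)$. The algorithm returns $\pm e_1$ with recorded value $0$, above the true optimum $-1$.
\item For the PSD matrix $Q=\diag(1,0,0)$ with $p=1$ and $x_1=e_1$, the subset $Y=\{1\}$ has $m=p$ and $\lambda=0$. If it is enumerated first, the strict test in line~7 keeps it and the output is again $\pm e_1$. So the paper's remark that the case $m=p$ needs no special treatment is wrong.
\end{itemize}
Hence the statement as written (arbitrary symmetric $Q$, algorithm taken literally) is false. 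The conditions you add are exactly what is needed, and with them your argument is correct:
\begin{itemize}
\item $Q\succeq0$, or the shift $Q+cI$;
\item choosing the top eigenvector inside $\operatorname{range}(P_Y)$;
\item breaking ties in favor of supports with $m<p$;
\item feasibility of \eqref{eq:ospca}.
\end{itemize}

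The feasibility assumption is not vacuous either. Take $a=(\sqrt5-1)/4$, $b=-(\sqrt5+1)/4$, $c=1/2$, and set $u_1=(0,a,b,c)$, $u_2=(b,0,a,c)$, $u_3=(a,b,0,c)$. These are orthonormal, $3$-sparse, and orthogonal to $(1,1,1,1)$. For $Q=3u_1u_1^\top+2u_2u_2^\top+u_3u_3^\top$ and $p=3$, the first three components are forced to be $\pm u_1,\pm u_2,\pm u_3$. The fourth subproblem is then infeasible: every $3$-subset has $m=p$, and the algorithm returns an arbitrary vector.
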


\begin{proof}
Algorithm \ref{alg:ksparse} enumerates every subset $Y \subseteq \{1,\dots,n\}$ with $|Y|=p$. For each such $Y$, it computes the principal submatrix $Q_Y = Q[Y,Y]$ and the projections $v_i = x_i[Y]$ of the previous components. Let $U_Y$ be an orthonormal basis for $\operatorname{span}\{v_1,\dots,v_{k-1}\}$ and $P_Y = I - U_YU_Y^\top$ (with $P_Y=I$ when $k=1$). The algorithm then solves
\[
\max_{\|z\|_2=1} z^\top (P_Y Q_Y P_Y) z,
\]
whose optimal value is the largest eigenvalue $\lambda_{\max}(P_Y Q_Y P_Y)$ and an optimal solution is any corresponding unit eigenvector $z$.

For any feasible $x$ of the $k$-th subproblem \eqref{eq:ospca} with support $Y$, the restricted vector $z = x[Y]$ satisfies $\|z\|_2=1$ and 
\[
\langle z,v_i\rangle = \langle x,x_i\rangle=0,\;\;\forall 1\le i\le k-1,
\]
hence $z$ is feasible for the reduced PCA problem \eqref{eq:reduced-pca} and $$x^\top Q x = z^\top Q_Y z = z^\top (P_Y Q_Y P_Y) z \le \lambda_{\max}(P_Y Q_Y P_Y).$$

Since the algorithm evaluates all $\binom{n}{p}$ possible supports and selects the one with the largest $\lambda_{\max}(P_Y Q_Y P_Y)$, it returns a global optimal solution. The number of subsets is finite, so the algorithm terminates after a finite number of steps.
\end{proof}

\subsection{Accelerative Algorithm via Branch-and-Bound}\label{sec:branch-and-bound}

However, finding exact optimal solutions for these SPCA subproblems can be computationally expensive. Furthermore, the exhaustive enumeration used in Algorithm~\ref{alg:ksparse} becomes impractical when dealing with large-scale SPCA problems. In practice, we can speed up the search process by integrating a branch-and-bound framework.

The core idea is to systematically explore the space of potential supports, while pruning branches that cannot yield a better solution than the current best one. In most cases, it is sufficient to compute approximate solutions that are within a small tolerance $\eps \ge 0$ of the optimal variance.

Similar to Definition \ref{def:ospca}, we define the $\eps$-optimal SPCA solution as follows.

\begin{definition}[$\eps$-optimal SPCA solution]\label{def:eps-spca}
For a given $\eps \ge 0$, a sequence $\{\widetilde{x}_1, \dots, \widetilde{x}_n\}$ is called an $\eps$-optimal solution to the SPCA problem for $(Q,p)$ if it satisfies:
\begin{enumerate}
    \item $\|\widetilde{x}_k\|_2 = 1$, $\|\widetilde{x}_k\|_0 \le p$, and $\widetilde{x}_k \perp \widetilde{x}_1, \dots, \widetilde{x}_{k-1}$ for each $k$.
    \item For each $k = 1, \dots, n$, the variance $\widetilde{x}_k^\top Q \widetilde{x}_k$ is at least $\eps$-close to the optimal value of the $k$-th SPCA subproblem, i.e., 
    \[
    \widetilde{x}_k^\top Q \widetilde{x}_k \ge \max_{\substack{\|x\|_2=1,~\|x\|_0 \le p \\ x \perp \widetilde{x}_1, \dots, \widetilde{x}_{k-1}}} x^\top Q x - \eps.
    \]
\end{enumerate}
\end{definition}

\begin{remark}[Certification of $\eps$-optimality]
\label{rem:eps_certification}
Definition~\ref{def:eps-spca} does not require access to the exact optimal value
of each SPCA subproblem. In practice, $\eps$-optimality can be certified by
comparing a candidate solution against a computable upper bound on the optimal
variance.
For instance, a natural upper bound is given by
the largest eigenvalue of $Q$. Given a candidate solution $\widetilde{x}_k$,
if
\[
\lambda_{\max}(Q) - \widetilde{x}_k^\top Q \widetilde{x}_k \le \eps,
\]
then $\widetilde{x}_k$ is guaranteed to be $\eps$-optimal.
More generally, as the solution process progresses, tighter upper bounds on the
optimal value can be obtained, for example, via
branch-and-bound procedures. As additional information is accumulated, these
upper bounds can be refined to approach the true optimum, enabling increasingly
accurate certification of $\eps$-optimality without solving the subproblem to
exact optimality.
\end{remark}

The accelerative algorithm employs branch-and-bound techniques (as in \cite{berk2019certifiably}) to prune the search space and terminates when the optimality gap is below $\eps$, thereby guaranteeing an $\eps$-optimal solution. Specifically, at each node of the branch-and-bound tree, we compute an upper bound on the maximum achievable variance and a lower bound of variance on a special feasible vector. If this upper bound does not exceed the current best solution value by at least $\eps$, the node is pruned and the subtree rooted at that node never be explored. %When a node corresponds to a fully determined support $Y$ with $|Y|\le p$, we need only solve the reduced PCA problem \eqref{eq:reduced-pca} with the added Gram--Schmidt orthogonalization step. At this node, the upper and lower bound of variance are $\lambda_{\max}(P_YQ_YP_Y)$. 

We refer the reader to Appendix \ref{app:branch-and-bound} for details of the branch-and-bound accelerative algorithm \ref{alg:branch-and-bound}, including the formulation of the lower and upper bounds. This accelerated algorithm allows us to solve larger SPCA problems to $\eps$-optimality within reasonable time.

\section{Decomposition Theorem for Block-Diagonal SPCA}\label{sec:decomposition_theorem_block_diagonal_spca}

Due to the $\ell_0$-constraint condition, Algorithm \ref{alg:ksparse} to solve the SPCA problem \eqref{eq:ospca} (or equivalent SPCA-MIO problem \eqref{eq:spca-mio}) requires enumerating all
$\binom{n}{p}$ candidate support sets in the worst case. However, for covariance matrices with special structure, some efficient strategies can be developed. In this section, we will introduce an important decomposition theorem for solving a block-diagonal SPCA problem, which provides an efficient method to obtain a global SPCA solution from the solutions of the block subproblems.

To present the decomposition theorem, we first introduce  necessary notation for the block-diagonal SPCA. Let $Q = \diag(Q_1, \dots, Q_d)$ be a symmetric block-diagonal matrix. For each block $Q_i\in\R^{n_i\times n_i}$, let $p_i = \min(p, n_i)$ be the corresponding sparsity parameter. Suppose we have obtained a solution to the block SPCA problem for $(Q_i,p_i)$, i.e., a set of vectors 
\[
\{y_1^{(Q_i)}, y_2^{(Q_i)}, \dots, y_{n_i}^{(Q_i)}\} \subset \R^{n_i}
\] 
satisfying Definition \ref{def:ospca} for $(Q,p)=(Q_i,p_i)$. For each such vector $y_j^{(Q_i)}$, we denote its zero-padded extension by $y_j^{(\W_i)}$, as described in Section \ref{sec:notation}.

Now we state two main decomposition theorems.

\begin{theorem}[Decomposition for Block-Diagonal SPCA] \label{thm:block-decomp}
Let $Q = \diag(Q_1, \dots, Q_d)\in\R^{n\times n}$ be a symmetric block-diagonal matrix and $p\le n$ be the sparsity parameter.
For each block $1\le i\le d$, let $\{y_j^{(Q_i)}\}_{j=1}^{n_i}$ be a solution to the block SPCA problem for $(Q_i,p_i)$, and $\{y_j^{(\W_i)}\}_{j=1}^{n_i}$ be its corresponding zero-padded extension. Sort all $n$ extended vectors $\bigcup_{i=1}^d\{y_j^{(\W_i)}\}_{j=1}^{n_i}$ in non-increasing order of their variances $(y_j^{(\W_i)})^\top Q y_j^{(\W_i)}$, and denote the sorted sequence by
\[
\{z_1, z_2, \dots, z_n\}.
\]
Then the sequence $\{z_1, z_2, \dots, z_n\}$ is a valid solution to the SPCA problem for original $(Q,p)$.
\end{theorem}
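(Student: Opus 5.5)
The plan is to argue by induction on $k$: assuming $z_1,\dots,z_{k-1}$ are valid first $k-1$ sparse principal components for $(Q,p)$, we show that $z_k$ is a global optimum of the $k$-th SPCA subproblem \eqref{eq:ospca} with respect to $z_1,\dots,z_{k-1}$. Two preliminary observations set this up.

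First, the sorting is arranged so that, within each block $i$, the vectors $y_1^{(\W_i)},y_2^{(\W_i)},\dots$ appear in their original order. Proposition~\ref{prop:tr_spca} applied to $(Q_i,p_i)$ shows that their variances are non-increasing, so a stable sort, with ties inside a block broken by the original index, achieves this. I flag this tie-breaking as a point to state explicitly. Consequently, among $z_1,\dots,z_{k-1}$ the vectors lying in $\W_i$ are exactly $y_1^{(\W_i)},\dots,y_{m_i}^{(\W_i)}$ for some $m_i\le n_i$. Second, since $Q$ is block-diagonal, $(y^{(\W_i)})^\top Q\, y^{(\W_i)}=(y^{(Q_i)})^\top Q_i\, y^{(Q_i)}$.

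\textbf{Feasibility of $z_k$.} Suppose $z_k=y_{m_i+1}^{(\W_i)}$. It has unit norm and $\|z_k\|_0\le p_i\le p$. It is orthogonal to every earlier $z_j$ lying in another block, because those vectors have disjoint supports. It is also orthogonal to $y_1^{(\W_i)},\dots,y_{m_i}^{(\W_i)}$ by Definition~\ref{def:ospca} applied to block $i$.

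\textbf{Optimality, which is the main step.} Take any $x$ feasible for the $k$-th subproblem and decompose $x=\sum_i a_i w_i$, where $w_i\in\W_i$ is a unit vector (or $a_i=0$) and $\sum_i a_i^2=1$. Block-diagonality kills the cross terms, giving
\[
x^\top Q x=\sum_i a_i^2\, w_i^\top Q w_i .
\]
For each $i$ with $a_i\neq 0$, the vector $w_i$ has the following properties.
\begin{itemize}
\item Its support is the part of $\mathrm{supp}(x)$ inside $I_i$, so $\|w_i\|_0\le\min(p,n_i)=p_i$.
\item Every earlier $z_j\in\W_i$ is supported in $I_i$, so $\langle x,z_j\rangle=a_i\langle w_i,z_j\rangle$. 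Hence $w_i\perp y_1^{(\W_i)},\dots,y_{m_i}^{(\W_i)}$.
\end{itemize}
If $m_i=n_i$, these vectors form an orthonormal basis of $\W_i$ (Proposition~\ref{prop:tr_spca}), which forces $w_i=0$, a contradiction. So $m_i<n_i$, and the restriction $w_i^{(Q_i)}$ is feasible for the $(m_i+1)$-th SPCA subproblem of block $(Q_i,p_i)$. Since $y_{m_i+1}^{(Q_i)}$ is optimal for that subproblem,
\[
w_i^\top Q w_i\le \bigl(y_{m_i+1}^{(\W_i)}\bigr)^\top Q\, y_{m_i+1}^{(\W_i)}.
\]

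\textbf{Conclusion.} Combining these bounds, $x^\top Qx$ is a convex combination of the terms $w_i^\top Q w_i$, so
\[
x^\top Qx\le\max_{i:\,m_i<n_i}\bigl(y_{m_i+1}^{(\W_i)}\bigr)^\top Q\, y_{m_i+1}^{(\W_i)}.
\]
The candidates $y_{m_i+1}^{(\W_i)}$ are exactly the first unused vectors of each block, and the sort places the largest of them next. Therefore this maximum equals $z_k^\top Q z_k$, which completes the induction.

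The base case $k=1$ is the same argument with all $m_i=0$. The step I expect to need the most care is the orthogonality transfer from $x$ to each $w_i$, together with the tie-breaking convention that identifies the earlier vectors in block $i$ with a prefix $y_1^{(\W_i)},\dots,y_{m_i}^{(\W_i)}$.
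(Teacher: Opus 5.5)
Your proof is correct and follows essentially the same route as the paper's: decompose a feasible $x$ blockwise, transfer orthogonality to each block component, bound each block term by the variance of $y_{m_i+1}^{(\W_i)}$, and use the sort to dominate these by $z_k^\top Q z_k$. You are more careful than the paper on two points it skips. It silently assumes the prefix structure $\{y_1^{(\W_i)},\dots,y_{m_i}^{(\W_i)}\}$ that your tie-breaking convention secures, and that convention costs nothing, since permuting tied vectors within a block gives another valid block solution; it also never treats exhausted blocks with $m_i=n_i$.
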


% \begin{corollary}\label{cor:block-decomp-first}
% For the first sparse principal component, the decomposition is particularly simple:
% \[
% \max_{\|x\|_2=1, \|x\|_0 \leq p} x^\top Q x = \max_{1 \leq i \leq d} \left( \max_{\|x\|_2=1, \|x\|_0 \leq p_i} x^\top Q_i x \right).
% \]
% Moreover, if $x_{i^*}$ is an optimal first sparse principal component for $Q_{i^*}$ achieving the maximum on the right-hand side, then its zero-padded extension $\tilde{x}_{i^*}$ is an optimal first sparse principal component for $Q$.
% \end{corollary}

% This corollary recovers the result of \cite{iclr2025} as a special case of our theorem. The theorem further generalizes to all subsequent sparse principal components, providing a complete decomposition of the SPCA problem for block-diagonal matrices.

%\textcolor{red}{\subsection{Decomposition for Suboptimal Solutions}}
%这里没说具体咋计算suboptimal，怎么降低计算成本，这里需要写个remark说一下，比如说我给了一个epsilon值，我计算出了一个解，怎么判断这个解和optimal的误差是epsilon

\begin{theorem}[Decomposition for $\eps$-optimal Block-Diagonal SPCA]\label{thm:eps-block-decomp}
Let $Q = \diag(Q_1, \dots, Q_d)\in\R^{n\times n}$ be a symmetric block-diagonal matrix and $p\le n$ be the sparsity parameter. For each block $i$, let $\{\widetilde{y}_j^{(Q_i)}\}_{j=1}^{n_i}$ be an $\eps$-optimal solution ($\eps\ge0$) to the block SPCA problem for $(Q_i, p_i)$, and let $\{\widetilde{y}_j^{(\W_i)}\}_{j=1}^{n_i}$ be their zero-padded extensions. Sort the set $\bigcup_{i=1}^d \{\widetilde{y}_j^{(\W_i)}\}_{j=1}^{n_i}$ in non-increasing order of the variances $(\widetilde{y}_j^{(\W_i)})^\top Q \widetilde{y}_j^{(\W_i)}$, and denote the sorted sequence by 
\[
\{\widetilde{z}_1, \dots, \widetilde{z}_n\}.
\]
Then the sequence $\{\widetilde{z}_1, \dots, \widetilde{z}_n\}$ is an $\eps$-optimal solution to the SPCA problem for original $(Q, p)$.
\end{theorem}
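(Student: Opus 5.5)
The plan is to verify the two conditions of Definition~\ref{def:eps-spca} for the sorted sequence $\{\widetilde z_1,\dots,\widetilde z_n\}$. The key structural facts are that $Q$ is block diagonal, so $x^\top Q x=\sum_i (x^{(Q_i)})^\top Q_i x^{(Q_i)}$ for every $x\in\R^n$, and that vectors from different subspaces $\W_i$ have disjoint supports. Theorem~\ref{thm:block-decomp} is the case $\eps=0$, since an exact solution is $0$-optimal, so it suffices to prove Theorem~\ref{thm:eps-block-decomp}.

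\textbf{Feasibility.} Each $\widetilde z_k$ is some $\widetilde y_j^{(\W_i)}$. Zero-padding preserves the Euclidean norm, so $\|\widetilde z_k\|_2=1$. Also $\|\widetilde z_k\|_0\le p_i\le p$. Orthogonality holds in two cases. Two vectors from the same block are orthogonal because the block sequence satisfies condition~1 of Definition~\ref{def:eps-spca}, and padding preserves inner products. Two vectors from different blocks are orthogonal because their supports are disjoint. Hence the $n$ vectors are pairwise orthonormal, whatever the sorting order.

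\textbf{$\eps$-optimality.} Fix $k$ and any $x$ feasible for the $k$-th subproblem, i.e.\ $\|x\|_2=1$, $\|x\|_0\le p$ and $x\perp \widetilde z_1,\dots,\widetilde z_{k-1}$. We aim to show $x^\top Qx\le \widetilde z_k^\top Q\widetilde z_k+\eps$. Write $a_i=\|x^{(Q_i)}\|_2^2$, so that $\sum_i a_i=1$.

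For block $i$, let $m_i$ be the largest integer such that $\widetilde y_1^{(\W_i)},\dots,\widetilde y_{m_i}^{(\W_i)}$ all lie among $\widetilde z_1,\dots,\widetilde z_{k-1}$. This is the longest prefix of block $i$'s sequence already used. Requiring only a prefix is the step needing care. The sorting may interleave vectors of a block out of their original order, because an $\eps$-optimal sequence need not have non-increasing variances. Working with the longest used prefix avoids any assumption on the sorting order.

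Consider first the case $m_i=n_i$. Then $x^{(Q_i)}$ is orthogonal to an orthonormal basis of $\R^{n_i}$, so $a_i=0$.

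Otherwise $m_i<n_i$, and the vector $\widetilde y_{m_i+1}^{(\W_i)}$ has not yet been placed among $\widetilde z_1,\dots,\widetilde z_{k-1}$. Since $\widetilde z_k$ has maximal variance among the remaining vectors,
\[
(\widetilde y_{m_i+1}^{(Q_i)})^\top Q_i\widetilde y_{m_i+1}^{(Q_i)}\le \widetilde z_k^\top Q\widetilde z_k .
\]
If $a_i>0$, set $u=x^{(Q_i)}/\sqrt{a_i}$. Then $u$ is a unit vector with $\|u\|_0\le\min(p,n_i)=p_i$. It is also orthogonal to $\widetilde y_1^{(Q_i)},\dots,\widetilde y_{m_i}^{(Q_i)}$, because orthogonality of $x$ to a vector supported on $I_i$ depends only on $x^{(Q_i)}$. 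Hence $u$ is feasible for the $(m_i+1)$-th SPCA subproblem of $(Q_i,p_i)$; when $m_i=0$ this is problem~\eqref{eq:spca1}. The $\eps$-optimality of the block sequence then gives
\[
u^\top Q_iu\le (\widetilde y_{m_i+1}^{(Q_i)})^\top Q_i\widetilde y_{m_i+1}^{(Q_i)}+\eps\le \widetilde z_k^\top Q\widetilde z_k+\eps .
\]

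\textbf{Conclusion.} Summing over blocks gives
\[
x^\top Qx=\sum_i a_i\,u_i^\top Q_iu_i\le \sum_i a_i\bigl(\widetilde z_k^\top Q\widetilde z_k+\eps\bigr)=\widetilde z_k^\top Q\widetilde z_k+\eps .
\]
Taking the maximum over feasible $x$ establishes condition~2 of Definition~\ref{def:eps-spca} for every $k$.
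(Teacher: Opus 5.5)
Your proof is correct, and it uses the same ingredients as the paper's:
\begin{itemize}
\item splitting $x^\top Qx$ into a convex combination of normalized block quadratic forms;
\item applying $\eps$-optimality of a block sequence at its first unused index;
\item using the sort so that $\widetilde z_k$ dominates every vector not yet placed.
\end{itemize}
The difference that matters is how you define $m_i$.

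The paper takes $m_i$ to be the \emph{number} of block-$i$ vectors among $\widetilde z_1,\dots,\widetilde z_{k-1}$. It then tacitly assumes these are $\widetilde y_1^{(\W_i)},\dots,\widetilde y_{m_i}^{(\W_i)}$, i.e.\ a prefix of the block sequence. From this it asserts $v_k^*=\max_i v_{k,i}^*$ and $\widetilde z_k^\top Q\widetilde z_k\ge(\widetilde y_{m_{i^*}+1}^{(Q_{i^*})})^\top Q_{i^*}\widetilde y_{m_{i^*}+1}^{(Q_{i^*})}$. But an $\eps$-optimal block sequence can have variances that increase by up to $\eps$, so the sort may take a block's vectors out of order, and then both assertions can fail.

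For example, take the single block $Q=\diag(1,1+\eps/2)$ with $\eps>0$, $p=1$, $\widetilde y_1=e_1$ and $\widetilde y_2=e_2$. The sort gives $\widetilde z_1=e_2$. At $k=2$ the paper's chain would require $1\ge 1+\eps/2$, even though the conclusion $1\ge v_2^*-\eps$ is true.

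Your longest-used-prefix choice repairs exactly this.
\begin{itemize}
\item Dropping orthogonality to the used vectors outside the prefix keeps $u$ feasible for the $(m_i+1)$-th block subproblem.
\item Maximality of $m_i$ guarantees that $\widetilde y_{m_i+1}^{(\W_i)}$ has not yet been placed, so $\widetilde z_k$ dominates it.
\item The same argument also handles ties in the exact case.
\item Bounding an arbitrary feasible $x$ directly removes the need for induction and for appealing to Theorem~\ref{thm:block-decomp}.
\end{itemize}
When the prefix property does hold, the paper's formulation gives the exact identity $v_k^*=\max_i v_{k,i}^*$. Yours gives only the upper bound the statement needs, but it holds for every ordering the statement permits.
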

\begin{proof}
    See Appendix \ref{app:proof-decop} and \ref{app:proof-decop-eps}, respectively.
\end{proof}

\paragraph{Complexity and practical considerations.} 

Applying SPCA directly to the full matrix $Q$ requires enumerating all
$\binom{n}{p}$ candidate support sets, which becomes computationally
prohibitive when $n$ is large and $p>1$. In contrast, in the block-diagonal
SPCA setting, the matrix $Q$ decomposes into $d$ independent blocks
$\{Q_i\}_{i=1}^d$, where each block $Q_i$ has dimension $n_i$, typically
much smaller than $n$. Consequently, performing SPCA separately on each
$Q_i$ only requires enumerating
\[
\sum_{i=1}^d \binom{n_i}{p_i}
\]
candidate supports, which can be acceptable when the block sizes $n_i$
are small.
In particular, assuming all blocks have the same dimension $n_i = n/d$ and
a parallel implementation is available, this process reduces to enumerating
\[
\binom{n/d}{p}
\]
candidate supports, which is substantially smaller than $\binom{n}{p}$ when $d$ is large. As a result, the proposed approach becomes computationally
feasible for large-scale problems that admit an (approximate)
block-diagonal structure.

%Theorem \ref{thm:eps-block-decomp} has important practical implications. It allows us to use fast approximation algorithms for solving the block SPCA subproblems, provided we can control their approximation guarantees. Moreover, if we apply thresholding to obtain an approximate block-diagonal matrix $Q_\eps$ from a general sparse matrix $Q$, and then solve the SPCA problem for $Q_\eps$ using Theorem \ref{thm:eps-block-decomp}, we obtain an approximate solution for the original matrix $Q$ with a controlled error bound.

\section{General SPCA via Thresholding and Block-Diagonalization}\label{sec:spca_thresholding_block_diagonalization}

In practical scenarios, the covariance matrix $Q$ may not have an exact block-diagonal structure. In this section, we present an approximate method that transforms a general symmetric positive semi-definite matrix into a block-diagonal form via thresholding and graph partitioning \cite{delefficient}. Then we apply the previous decomposition theorems to solve the approximate SPCA problem and provide theoretical guarantees on the quality of the obtained solution.

\subsection{Matrix Block-diagonlization}

Given a symmetric positive semi-definite matrix $Q \in \mathbb{R}^{n \times n}$ and a threshold $\delta \ge 0$, we define the thresholded matrix $Q^\delta$ by setting to zero for all entries whose absolute value is below $\delta$:
\[
Q^\delta_{ij} = 
\begin{cases}
Q_{ij}, & \text{if } |Q_{ij}| \geq \delta, \\
0, & \text{otherwise}.
\end{cases}
\]
The matrix $Q^\delta$ is symmetric and approximates $Q$ in the sense that $\|Q - Q^\delta\|_{\max} \leq \delta$, where $\|\cdot\|_{\max}$ denotes the maximum absolute entrywise difference.

The sparse matrix $Q^\delta$ can be transformed into a block-diagonal matrix by identifying its connected components. Construct an undirected graph $G = (V, E)$ with vertex set $V = \{1, \dots, n\}$ and edge set
\[
E = \{(i,j) : i \neq j \text{ and } (Q^\delta)_{ij} \neq 0\}.
\]
Two vertices are connected if there is a path between them. Let $C_1, C_2, \dots, C_d$ be the connected components of $G$. By permuting the rows and columns of $Q^\delta$ according to these components (i.e., ordering indices so that all indices in $C_1$ come first, then $C_2$, etc.), we obtain a block-diagonal matrix:
\[
A = \Pi Q^\delta \Pi^\top = \diag(A_1, A_2, \dots, A_d),
\]
where $\Pi$ is a permutation matrix, and each block $A_i$ corresponds to the submatrix of $Q^\delta$ restricted to the indices in $C_i$ (see Algorithm \ref{alg:block-diag}).

\begin{algorithm}[t]
\caption{Matrix Block-Diagonalization}
\label{alg:block-diag}
\begin{algorithmic}[1]
\REQUIRE Symmetric matrix $Q \in \mathbb{R}^{n \times n}$, threshold $\delta \geq 0$.
\ENSURE Permutation matrix $\Pi$, block-diagonal matrix $A = \Pi Q^\delta \Pi^\top$.
\STATE \textbf{Thresholding:} Construct $Q^\delta$ by setting entries with $|Q_{ij}| < \delta$ to zero:
\[
Q^\delta_{ij} = 
\begin{cases}
Q_{ij}, & \text{if } |Q_{ij}| \geq \delta, \\
0, & \text{otherwise}.
\end{cases}
\]
\STATE \textbf{Graph construction:} Build undirected graph $G=(V,E)$ with $V=\{1,\dots,n\}$ and 
$E = \{(i,j) : i \neq j \text{ and } Q^\delta_{ij} \neq 0\}$.
\STATE \textbf{Connected components:} Compute the connected components $C_1, C_2, \dots, C_d$ of $G$.
\STATE \textbf{Permutation:} Generate a permutation $\pi$ of $\{1,\dots,n\}$ that lists indices in $C_1$ first, then $C_2$, etc. Let $\Pi$ be the corresponding permutation matrix (i.e., $\Pi_{i,\pi(i)}=1$ for all $i$, others zero).
\STATE \textbf{Block-diagonal form:} Compute $A = \Pi Q^\delta \Pi^\top$. This matrix is block-diagonal with blocks $A_i$ corresponding to components $C_i$.
\STATE \textbf{Return} $\Pi$, $A$. 
\end{algorithmic}
\end{algorithm}

The permutation matrix $\Pi$ is orthogonal ($\Pi^\top = \Pi^{-1}$) and represents the reordering of indices according to the connected components. The block-diagonal structure of $A$ allows us to apply the decomposition theorems (Theorems \ref{thm:block-decomp} and \ref{thm:eps-block-decomp}) independently on each block. Then we have the following theorem to illustrate the relationship between the original SPCA problem with $(Q,p)$ and these block SPCA problems with $(A_i, \min(p, |C_i|))$.

\subsection{Threshold Decomposition for SPCA}

\begin{theorem}\label{thm:threshold-decop}
Given a symmetric positive semi-definite matrix $Q\in\R^{n\times n}$ and sparsity parameter $p\le n$, we obtain the permutation matrix $\Pi$ and block-diagonal matrix $A=\diag(A_1,\dots,A_d)$ by Algorithm \ref{alg:block-diag}. Then for any $\eps$-optimal solution $\{\widetilde{z}_1,\dots,\widetilde{z}_n\}$ to the SPCA problem for $(A,p)$, 
the sequence $\{\Pi^\top\widetilde{z}_1,\dots,\Pi^\top\widetilde{z}_n\}$ is a $(2p\delta+\eps)$-optimal solution to the SPCA problem for original $(Q,p)$.
\end{theorem}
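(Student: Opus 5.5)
The plan is to compare the two SPCA problems, for $(Q,p)$ and for $(A,p)$, through the intermediate matrix $Q^\delta$. I will use two facts. First, the permutation $\Pi$ preserves every constraint in Definition~\ref{def:eps-spca}. Second, replacing $Q$ by $Q^\delta$ changes the variance of any $p$-sparse unit vector by at most $p\delta$. Each of the two comparisons (upper-bounding the optimum, lower-bounding the candidate) loses one $p\delta$, which gives the total slack $2p\delta+\eps$.

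\textbf{Step 1 (the perturbation bound).} Let $x\in\R^n$ satisfy $\|x\|_2=1$ and $\|x\|_0\le p$. Write $E=Q-Q^\delta$, so that $\|E\|_{\max}\le\delta$; this holds for diagonal entries too, since a diagonal entry is zeroed only when $|Q_{ii}|<\delta$. Then
\[
|x^\top Q x - x^\top Q^\delta x| = |x^\top E x| \le \sum_{i,j}|E_{ij}||x_i||x_j| \le \delta\,\|x\|_1^2 \le \delta\, p\,\|x\|_2^2 = p\delta .
\]
The step $\|x\|_1^2\le p\|x\|_2^2$ is Cauchy--Schwarz applied on the support of $x$, which has at most $p$ elements. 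This is the only place where sparsity enters the argument, and it is the key estimate. The rest of the proof is bookkeeping.

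\textbf{Step 2 (feasibility transfers through $\Pi$).} Set $w_k=\Pi^\top\widetilde z_k$. Since $\Pi$ is a permutation matrix, it preserves Euclidean norms, supports up to relabeling (hence $\ell_0$-norms), and inner products. Therefore each $w_k$ has $\|w_k\|_2=1$ and $\|w_k\|_0\le p$, and $w_k\perp w_1,\dots,w_{k-1}$. This verifies condition (1) of Definition~\ref{def:eps-spca}.

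The same reasoning gives a bijection between feasible sets. A vector $x$ is feasible for the $k$-th SPCA subproblem for $Q$ with respect to $w_1,\dots,w_{k-1}$ if and only if $\Pi x$ is feasible for the $k$-th subproblem for $A$ with respect to $\widetilde z_1,\dots,\widetilde z_{k-1}$. Moreover, $(\Pi x)^\top A(\Pi x)=x^\top Q^\delta x$ because $A=\Pi Q^\delta\Pi^\top$.

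\textbf{Step 3 (optimality chain).} Fix $k$ and any $x$ feasible for the $k$-th subproblem of $(Q,p)$ with respect to $w_1,\dots,w_{k-1}$. I will chain four inequalities:
\[
x^\top Q x \le x^\top Q^\delta x + p\delta = (\Pi x)^\top A(\Pi x)+p\delta \le \widetilde z_k^\top A\widetilde z_k+\eps+p\delta = w_k^\top Q^\delta w_k+\eps+p\delta \le w_k^\top Q w_k + \eps+2p\delta .
\]
The first inequality is Step 1 applied to $x$. The second uses Step 2 together with the $\eps$-optimality of $\{\widetilde z_k\}$ for $(A,p)$. The last is Step 1 applied to the $p$-sparse unit vector $w_k$.

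Taking the maximum over all such $x$ gives condition (2) of Definition~\ref{def:eps-spca} with tolerance $2p\delta+\eps$. Together with Step 2, this shows that $\{\Pi^\top\widetilde z_k\}$ is a $(2p\delta+\eps)$-optimal solution for $(Q,p)$.
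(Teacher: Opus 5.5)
Your proof is correct and follows essentially the same route as the paper's. The entrywise bound $\|Q-Q^\delta\|_{\max}\le\delta$ together with Cauchy--Schwarz on the support gives the $p\delta$ perturbation estimate, and the permutation identifies the two feasible sets via $(\Pi x)^\top A(\Pi x)=x^\top Q^\delta x$. Applying the estimate twice, once to the optimum and once to $\Pi^\top\widetilde z_k$, yields $2p\delta+\eps$; your pointwise chain over feasible $x$ is the paper's chain of maxima written out, and you additionally make explicit the check of condition (1) of Definition~\ref{def:eps-spca}, which the paper leaves implicit.
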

\begin{proof}
    See Appendix \ref{app:proof-approx} for details.
\end{proof}

Theorem \ref{thm:threshold-decop} shows that we can solve the SPCA problem for the approximate matrix $A$ and then map the solution back to the original problem with a quantifiable error bound. However, in practical applications, one often requires only the first $K$ ($K \ll n$) sparse principal components rather than the complete orthonormal basis. The procedure described in Theorem \ref{thm:threshold-decop} can be adapted to compute these first $K$ components efficiently. The idea is to maintain a pool of candidate components, where each candidate corresponds to the next available component from one of the blocks, and iteratively select the candidate with the largest variance until $K$ components are obtained.
%Theorem \ref{thm:threshold-decop} shows that, we can solve the SPCA problem in each block $A_i$ independently, with sparsity parameter $p_i = \min(p, n_i)$. For each block $A_i$, we will obtain an $\eps$-optimal solution $\{\widetilde{y}_1^{(A_i)}, \dots, \widetilde{y}_{n_i}^{(A_i)}\}$ of orthogonal sparse principal components. These vectors will be extended to $\mathbb{R}^n$ by zero-padding outside block $i$, yielding $\{\widetilde{y}_1^{(\W_i)}, \dots, \widetilde{y}_{n_i}^{(\W_i)}\}$. After sorting all extended vectors in descending order of their variances $\widetilde{y}^\top A \widetilde{y}$, we obtain a sorted sequence $\{\widetilde{z}_1, \dots, \widetilde{z}_n\}$. Finally, $\{\Pi^\top\widetilde{z}_1,\dots,\Pi^\top\widetilde{z}_n\}$ is generated by applying the inverse permutation and is a $(2p\delta+\eps)$-optimal solution for the original SPCA problem $(Q,p)$.

Algorithm \ref{alg:threshold-spca} initializes by computing the \emph{first sparse principal component} $\widetilde{y}_1^{(A_i)}$ for each block $A_i$ (line 4-6). The extended vectors form the initial candidate set $S$. The first global component $\widetilde{z}_1$ is simply the vector in $S$ with the maximum variance (line 9). In subsequent iterations ($k \ge 2$), only the block $\ell$ that contributed the previous component needs to update its next component (line 12). This component is computed under the orthogonal constraint with respect to all previously selected components from the same block (ensured by passing the matrix $X_k^{(A_\ell)}$ to Algorithm \ref{alg:ksparse}). The candidate set $S$ is updated by replacing the $\ell$-th candidate with this new vector (line 14), and the $k$-th global component is again the one with the largest variance in $S$ (line 15). After obtaining $\{\widetilde{z}_1, \dots, \widetilde{z}_K\}$, the inverse permutation yields the final solution $\{u_1, \dots, u_K\}$ for the original matrix $Q$.

\begin{algorithm}[t]
\caption{Threshold Decomposition for SPCA}
\label{alg:threshold-spca}
\begin{algorithmic}[1]
\REQUIRE Symmetric matrix $Q \in \mathbb{R}^{n \times n}$, sparsity $p$, threshold $\delta \ge 0$, tolerance $\eps \ge 0$, number of components $1\le K\le n$.
\ENSURE Orthonormal sparse vectors $\{u_1, \dots, u_K\}$, which form a $(2p\delta+\eps)$-optimal solution to the first $K$ SPCA subproblems for $(Q,p)$.
\STATE \textbf{Preprocessing:} Generate permutation matrix $\Pi$ and block-diagonal matrix $A=\diag(A_1,\dots,A_d)$ by Algorithm \ref{alg:block-diag}, where $A_i\in\R^{n_i\times n_i}$ for all $1\le i\le d$.
\FOR {$k=1$ to $K$}
\IF {$k=1$}
\FOR {$i = 1$ to $d$}
    \STATE $p_i \gets \min(p, n_i)$, $X_1^{(A_i)}\gets \emptyset.$
    \STATE $\widetilde{y}_1^{(A_i)}\gets \left\{\begin{aligned} &\text{Algorithm~\ref{alg:ksparse}}(A_i,p_i,X_1^{(A_i)}), \;\; \eps=0\\
    &\text{Algorithm~\ref{alg:branch-and-bound}}(A_i,p_i,X_1^{(A_i)},\eps), \;\; \eps>0
    \end{aligned}\right.$
\ENDFOR
\STATE Collect all corresponding extended vectors of $\R^n$
\[
S \gets \{\widetilde{y}_1^{(\W_1)}, \dots, \widetilde{y}_1^{(\W_d)}\}
\]
\STATE Select the vector $\widetilde{z}_1\in S$ and corresponding block index $\ell$, with the maximum variance among $S$.
\ELSE 
\STATE Update $X_k^{(A_\ell)}\gets[X_{k-1}^{(A_\ell)};\widetilde{y}_{k-1}^{(A_\ell)}]$.
\STATE $\widetilde{y}_k^{(A_\ell)}\gets \left\{\begin{aligned} &\text{Algorithm~\ref{alg:ksparse}}(A_\ell,p_\ell,X_k^{(A_\ell)}), \;\; \eps=0\\
&\text{Algorithm~\ref{alg:branch-and-bound}}(A_\ell,p_\ell,X_k^{(A_\ell)},\eps), \;\; \eps>0
\end{aligned}\right.$
\STATE Update $S$ by replacing the $\ell$-th elements as $\widetilde{y}_k^{(\W_\ell)}$.
\STATE Select the $k$-th component $\widetilde{z}_k\in S$ and update its corresponding block index $\ell$, with the maximum variance among the new set $S$.
\ENDIF
\ENDFOR
\STATE \textbf{Inverse permutation:} $u_k \gets \Pi^\top \widetilde{z}_k$ for $k = 1, \dots, K$.
\STATE \textbf{Return} $u_1, \dots, u_K$.
\end{algorithmic}
\end{algorithm}

\begin{theorem}\label{thm:threshold-alg-correctness}
The output $\{u_1, \dots, u_K\}$ of Algorithm \ref{alg:threshold-spca} is a $(2p\delta + \eps)$-optimal solution for the first $K$ subproblems of the original SPCA problem for $(Q, p)$. That is, for each $k = 1, \dots, K$, we have
\[
u_k^\top Q u_k \ge v_k^* - (2p\delta + \eps),
\]
where $v_k^*$ is the optimal value of the $k$-th SPCA subproblem \eqref{eq:ospca} for $(Q, p)$.
\end{theorem}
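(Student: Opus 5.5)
The plan is to reduce the statement to Theorem~\ref{thm:threshold-decop}. I would show that the first $K$ vectors $\widetilde{z}_1,\dots,\widetilde{z}_K$ produced by Algorithm~\ref{alg:threshold-spca} coincide with the first $K$ terms of a sorted sequence of the type appearing in Theorem~\ref{thm:eps-block-decomp}, for some choice of $\eps$-optimal block solutions. Theorem~\ref{thm:eps-block-decomp} would then make them the first $K$ terms of an $\eps$-optimal solution for $(A,p)$, and Theorem~\ref{thm:threshold-decop} would transfer this to $(Q,p)$ with loss $2p\delta+\eps$. Definition~\ref{def:eps-spca} is sequential: the $k$-th condition only involves $\widetilde{x}_1,\dots,\widetilde{x}_k$. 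So the guarantee for $u_1,\dots,u_K$ is exactly the first $K$ conditions of that definition, with $v_k^*$ being the optimal value of the $k$-th subproblem for $(Q,p)$ constrained orthogonal to $u_1,\dots,u_{k-1}$.

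\textbf{Step 1: the per-block sequences are $\eps$-optimal prefixes.} Within each block $A_i$, the algorithm produces $\widetilde{y}_1^{(A_i)},\widetilde{y}_2^{(A_i)},\dots$ sequentially. Each one is generated by Algorithm~\ref{alg:ksparse}, which is exact by the GS-SPCA theorem, or by Algorithm~\ref{alg:branch-and-bound}, which is $\eps$-optimal. In both cases the orthogonality constraints are exactly the previously selected vectors of that block, stored in $X_k^{(A_i)}$. Hence each block's produced prefix satisfies Definition~\ref{def:eps-spca} for $(A_i,p_i)$. Each prefix can be completed to a full $\eps$-optimal solution $\{\widetilde{y}_j^{(A_i)}\}_{j=1}^{n_i}$ by continuing the same procedure.

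\textbf{Step 2: the candidate pool is a merge of non-increasing lists.} By the argument in Proposition~\ref{prop:tr_spca}, exact per-block variances are non-increasing, since $\widetilde{y}_{j+1}$ is feasible for the $j$-th subproblem. Given this, the pool $S$ always holds the head of each block's list, and selecting the maximum is a $d$-way merge. By induction on $k$, $\widetilde{z}_1,\dots,\widetilde{z}_K$ are then the first $K$ elements of the globally sorted union of the padded vectors, with ties broken consistently. Applying Theorem~\ref{thm:eps-block-decomp} and then Theorem~\ref{thm:threshold-decop} gives the claim.

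\textbf{Main obstacle.} In the $\eps>0$ case, the per-block variance sequences need not be monotone: an $\eps$-optimal $\widetilde{y}_{j+1}$ may exceed $\widetilde{y}_j$ by up to $\eps$. The merge then differs from a full sort, so Theorem~\ref{thm:eps-block-decomp} cannot be cited verbatim. I would handle this by redoing the inductive argument of Theorem~\ref{thm:eps-block-decomp} directly for the greedy merge.

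For the lower bound, the chosen candidate is at least every head in $S$. For the upper bound, a feasible $x$ for the $k$-th subproblem on $A$, orthogonal to $\widetilde{z}_1,\dots,\widetilde{z}_{k-1}$, satisfies
\[
x^\top A x \le \max_i \mathrm{OPT}_i,
\]
where $\mathrm{OPT}_i$ is the block-$i$ subproblem value given that block's selected vectors. This needs the block-wise split of $x$ used in the proof of Theorem~\ref{thm:block-decomp}: $x=\sum_i x^{(\W_i)}$ with each piece orthogonal to its block's selected vectors and $\|x^{(\W_i)}\|_0\le p_i$. Combined with $\text{head}_i \ge \mathrm{OPT}_i-\eps$, this shows the chosen vector is $\eps$-optimal for $A$.

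The remaining check is that these $\eps$-optimality guarantees for $A$ are what Theorem~\ref{thm:threshold-decop}'s proof uses. That proof is per-$k$, bounding $|x^\top Qx - x^\top Q^\delta x|\le p\delta$ for $p$-sparse unit $x$. So it applies prefix-wise, yielding $u_k^\top Q u_k\ge v_k^*-(2p\delta+\eps)$.
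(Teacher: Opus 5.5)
Your proposal is correct, and its outer structure matches the paper's. You show that $\widetilde z_1,\dots,\widetilde z_K$ satisfy the first $K$ conditions of Definition~\ref{def:eps-spca} for $(A,p)$, then pass to $(Q,p)$ through the per-$k$ argument of Theorem~\ref{thm:threshold-decop}.

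The difference is the middle step. The paper asserts that repeatedly taking the maximum-variance element of $S$ ``is equivalent to sorting the union'' of the block-wise components, and then invokes Theorem~\ref{thm:eps-block-decomp}. As you observe, this requires each block's variance sequence to be non-increasing. That holds for $\eps=0$ but not for $\eps>0$, where $\widetilde y_{j+1}$ may exceed $\widetilde y_j$ by up to $\eps$. The greedy merge can then emit a block's $\widetilde y_1$ before its larger $\widetilde y_2$, so its output is not a sorted sequence and Theorem~\ref{thm:eps-block-decomp} does not literally apply.

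Your direct induction closes this gap. Its three ingredients are:
\begin{itemize}
\item the chosen element has variance at least that of every head;
\item a feasible $x$ splits into $p_i$-sparse block pieces orthogonal to that block's selected vectors, so $x^\top A x$ is a convex combination of terms bounded by $\mathrm{OPT}_i$;
\item each head is within $\eps$ of $\mathrm{OPT}_i$.
\end{itemize}
The argument is clean because the merge always removes a \emph{prefix} of each block's list, which is exactly the $m_i$-bookkeeping the appendix proofs rely on.

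The paper's route is shorter and suffices for $\eps=0$, where the merge of non-increasing lists really is a sort. Yours covers every $\eps\ge 0$. It also makes your Step~2 unnecessary, since the direct argument uses neither monotonicity nor sorting. Your reading of $v_k^*$ as the optimum of the subproblem orthogonal to $u_1,\dots,u_{k-1}$ is the one consistent with Definition~\ref{def:eps-spca}; the statement leaves this implicit.

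One caveat is shared with the paper's proof and is not specific to yours. Both arguments assume that a block whose $n_i$ components have all been selected drops out of $S$. Algorithm~\ref{alg:threshold-spca} as written would instead recompute that block's head under a full-rank constraint set, where no feasible vector exists. With Algorithm~\ref{alg:ksparse}, $P_Y=0$ for every support, and the returned vector is necessarily non-orthogonal to the block's earlier components. For a singleton block it is the same vector again, up to sign.
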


\begin{proof}

By Theorem \ref{thm:threshold-decop}, it suffices to show that the vectors $\{\widetilde{z}_1,\dots,\widetilde{z}_K\}$ computed in Algorithm \ref{alg:threshold-spca} are the first $K$ components of an $\eps$-optimal solution to the SPCA problem for the block-diagonal matrix $A$. 

Observe that Algorithm \ref{alg:threshold-spca} maintains a set $S$ of candidate vectors, each being the next available sparse principal component from one block. At step $k$, the algorithm selects the vector $\widetilde{z}_k$ with the largest variance among $S$. This selection process is equivalent to sorting the union of all block-wise sparse principal components by their variances in descending order. Therefore, $\{\widetilde{z}_1,\dots,\widetilde{z}_K\}$ are exactly the first $K$ components of the sorted sequence of all block-wise components.

Since each block $A_i$ is solved independently using Algorithm~\ref{alg:ksparse} (or an $\eps$-optimal variant Algorithm~\ref{alg:branch-and-bound}), the collection of all block-wise components forms an $\eps$-optimal solution for $A$. Consequently, the first $K$ components of this sorted sequence constitute the first $K$ components of an $\eps$-optimal solution for $A$. Applying Theorem \ref{thm:threshold-decop} with the inverse permutation $\Pi^\top$ yields the desired result.
\end{proof}
Algorithm \ref{alg:threshold-spca} provides an efficient way to compute the first $K$ sparse principal components without solving for all $n$ components. The algorithm leverages the block-diagonal structure to decompose the problem and selects components greedily based on their variances. This approach is particularly beneficial when $K$ is small and the blocks are of moderate size.

\paragraph{Practical acceleration via Branch-and-Bound.}
In Algorithm \ref{alg:threshold-spca}, the call to Algorithm \ref{alg:ksparse} (line 6 and 13) can be replaced by its accelerated version using the branch-and-bound framework described in Appendix \ref{app:branch-and-bound}. This substitution is sound because the branch-and-bound Algorithm~\ref{alg:branch-and-bound} solves the same SPCA subproblem (for a single block $A_i$ under orthogonality constraints) to $\eps$-optimality. The theoretical guarantees of Theorem \ref{thm:threshold-alg-correctness} remain valid, as they depend only on the $\eps$-optimality of the block solutions, not on the specific method used to obtain them. 
This accelerated approach ensures that solutions with certified quality can be obtained within reasonable running time.

% Algorithm \ref{alg:ksparse} requires enumerating all $\binom{n}{p}$ support sets, which is computationally expensive for large $n$ and $p > 1$. However, in the case of the block-diagonal SPCA problem, each block $A_i$ has size $n_i$, which is typically much smaller than $n$. Therefore, the total cost $\sum_{i=1}^d \binom{n_i}{p_i}$ may be acceptable when $n_i$ and $p_i$ are small. This makes our overall approach feasible for large-scale problems that admit a block-diagonal approximation.
\section{Experimental Results}
In this section, we evaluate the numerical behavior of our proposed method.
Our goal is not to compete for the best empirical performance, but to verify that the algorithm is effective in enforcing orthogonality while maintaining a computational cost that is acceptable compared with non-orthogonal alternatives. The improvement in algorithm efficiency is evident from combining our proposed GS-SPCA algorithm with the decomposition framework and branch-and-bound method. Thus, we have temporarily omitted this part of the experiments in this version.
% Accordingly, we use a straightforward implementation and do not incorporate sophisticated engineering tricks or heavy hyper-parameter tuning.

\paragraph{Datasets.}
We conduct experiments using well-established benchmark datasets commonly employed in sparse feature selection and high-dimensional covariance analysis. Specifically, we utilize the \textsc{CovColon} dataset introduced by \cite{alon1999broad}. For our experiments, we selected the first 20 rows and 20 columns from this dataset.

\paragraph{Baselines.}
We compare against representative non-orthogonal SPCA approaches.
In particular, following \cite{berk2019certifiably}, we include the baseline that solves SPCA approximately via an adjusted (regularized) covariance matrix.

\paragraph{Notation / Preprocessing.}
For \textsc{CovColon}, we first apply Algorithm~\ref{alg:block-diag} to block-diagonalize the empirical covariance matrix.
We then select the largest diagonal block and use it as the input covariance matrix for all subsequent SPCA solvers.

% (To be filled.)
\begin{figure*}[htbp]
 \centering
    % 第一行的子图
    \subfloat[]{
        \includegraphics[width=0.3\textwidth]{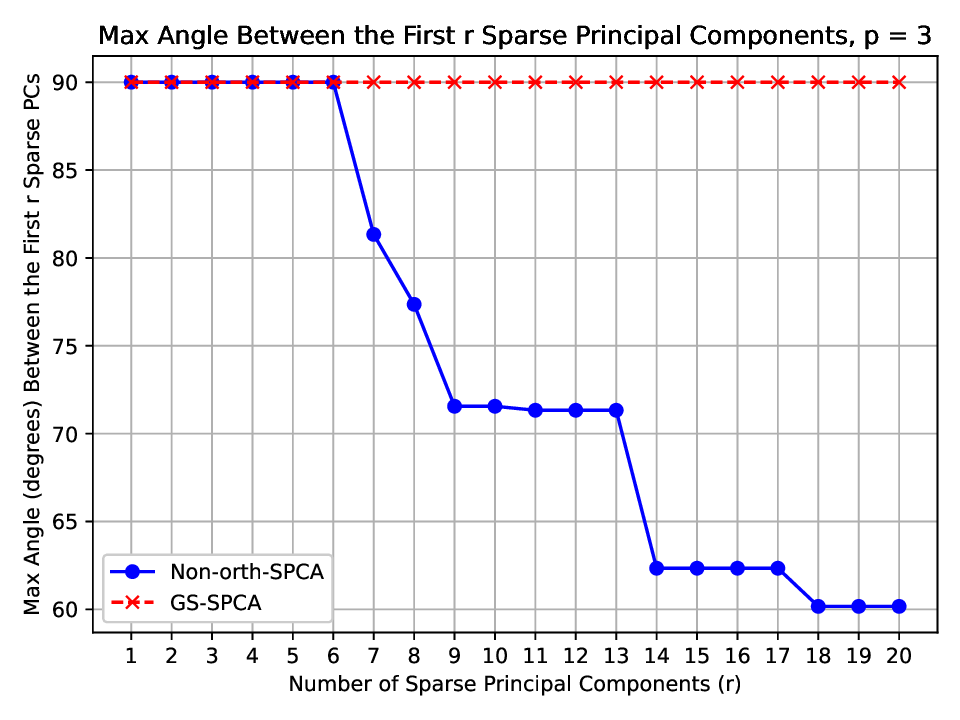}
        \label{fig:max_angle_k20_p3}
    }
    \subfloat[]{
        \includegraphics[width=0.3\textwidth]{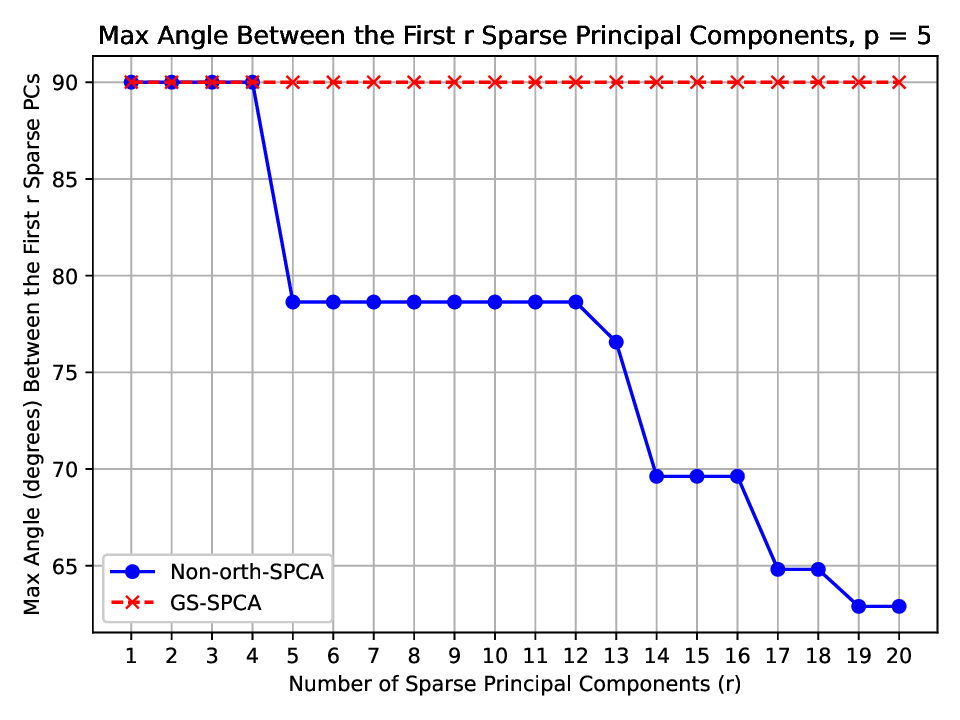}
        \label{fig:max_angle_k20_p5}
    }
    \subfloat[]{
        \includegraphics[width=0.3\textwidth]{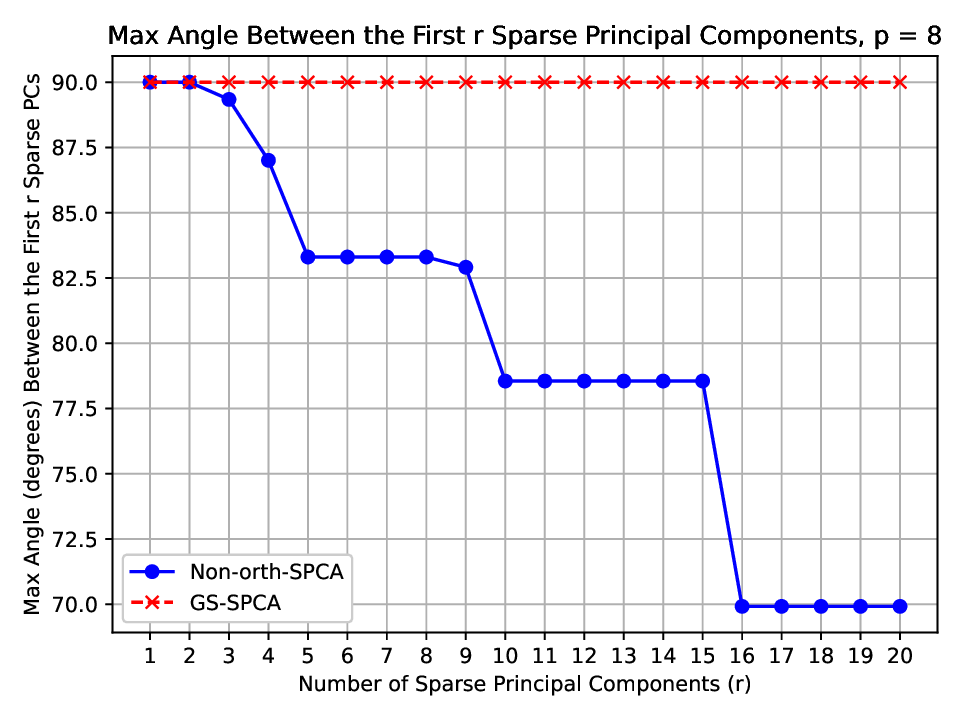}
        \label{fig:max_angle_k20_p8}
    }

    \vskip 10pt  % 控制子图之间的垂直间距

    % 第二行的子图
    \subfloat[]{
        \includegraphics[width=0.3\textwidth]{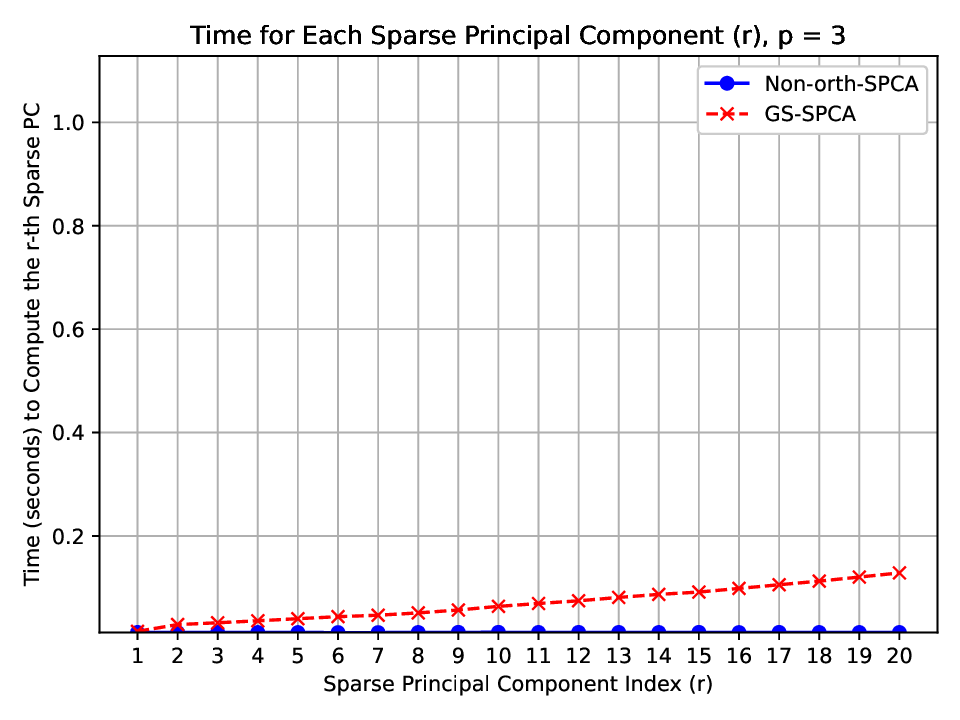}
        \label{fig:step_times_k20_p3}
    }
    \subfloat[]{
        \includegraphics[width=0.3\textwidth]{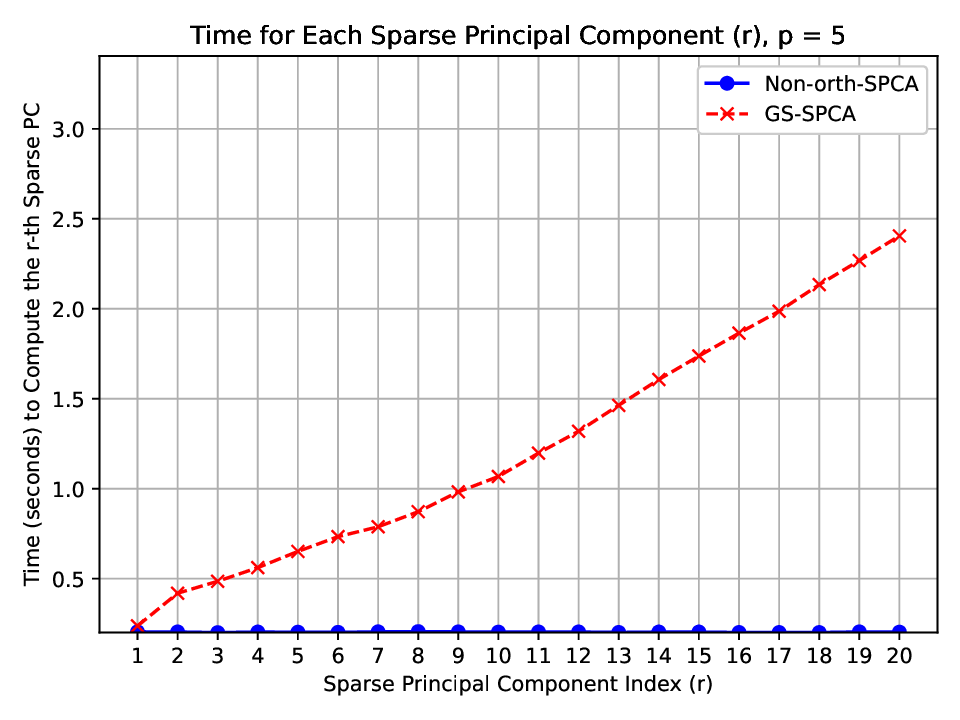}
        \label{fig:step_times_k20_p5}
    }
    \subfloat[]{
        \includegraphics[width=0.3\textwidth]{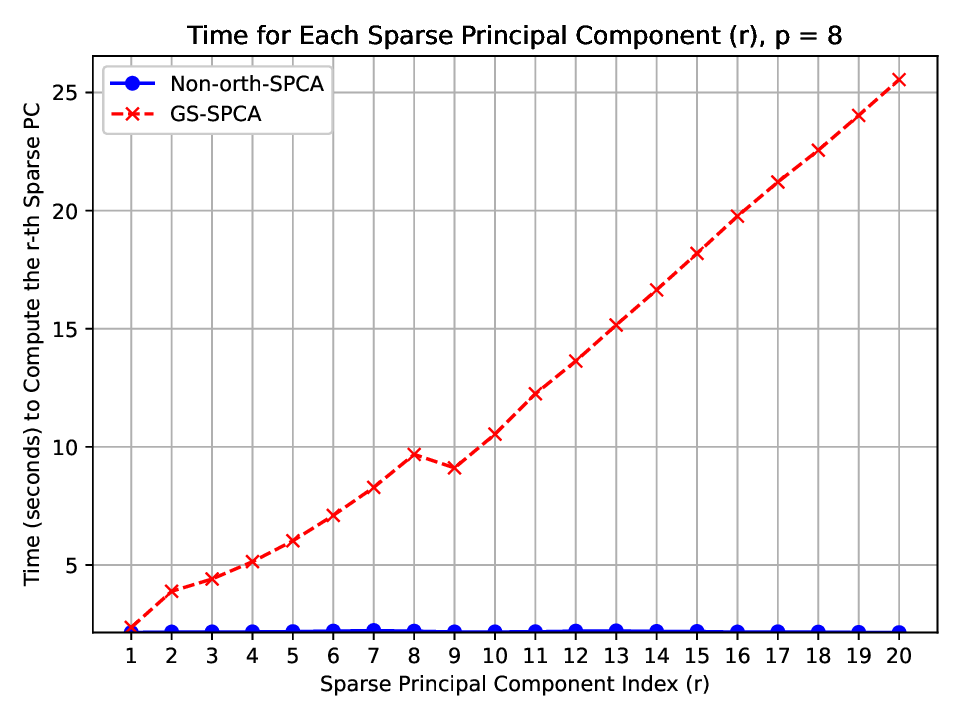}
        \label{fig:step_times_k20_p8}
    }

    \vskip 10pt  % 控制子图之间的垂直间距

    % 第三行的子图
    \subfloat[]{
        \includegraphics[width=0.3\textwidth]{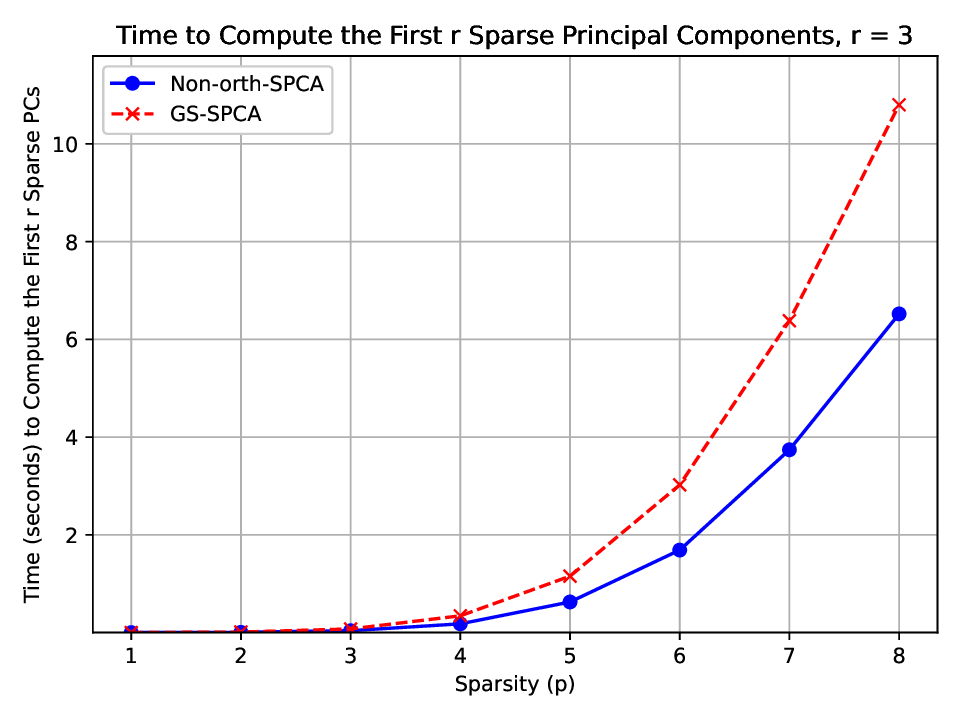}
        \label{fig:step_times_k20_r3}
    }
    \subfloat[]{
        \includegraphics[width=0.3\textwidth]{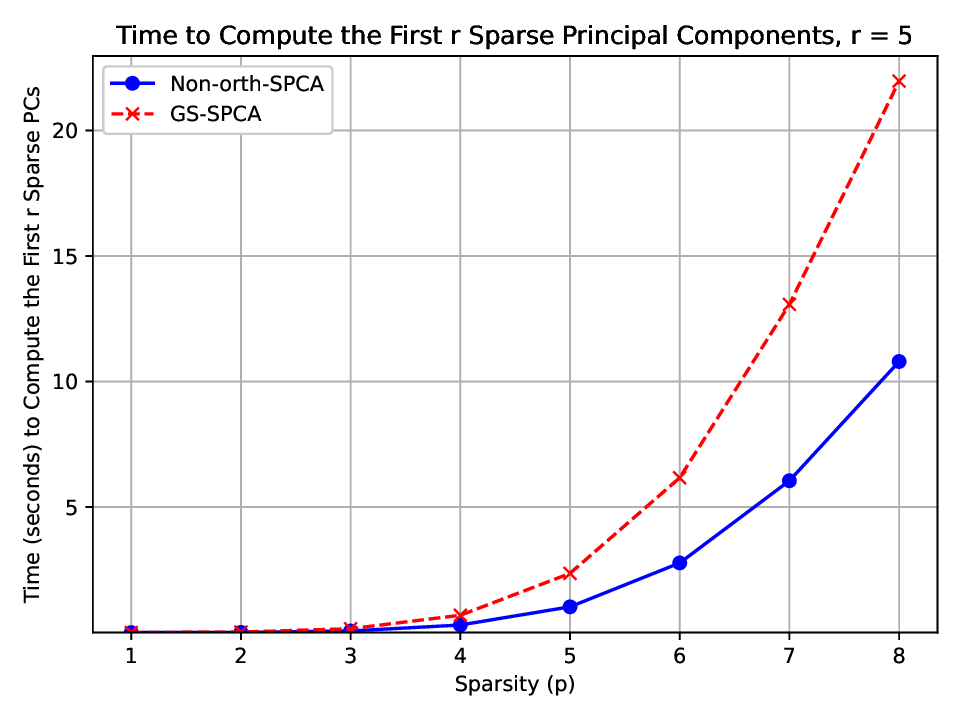}
        \label{fig:step_times_k20_r5}
    }
    \subfloat[]{
        \includegraphics[width=0.3\textwidth]{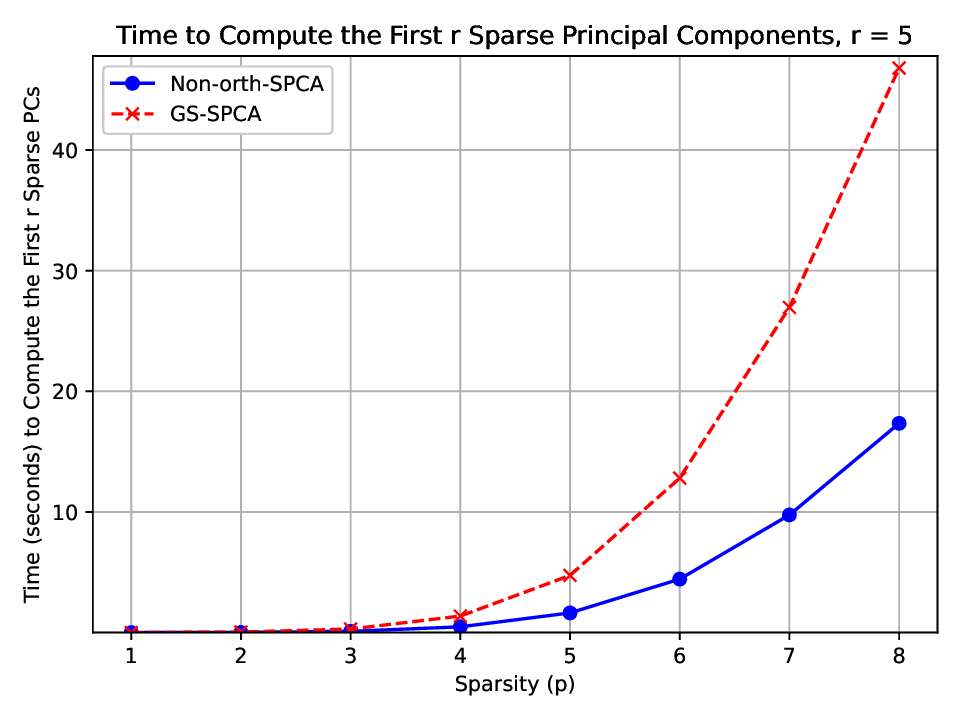}
        \label{fig:step_times_k20_r8}
    }

    \vskip 10pt  % 控制子图之间的垂直间距

    % 第四行的子图
    \subfloat[]{
        \includegraphics[width=0.3\textwidth]{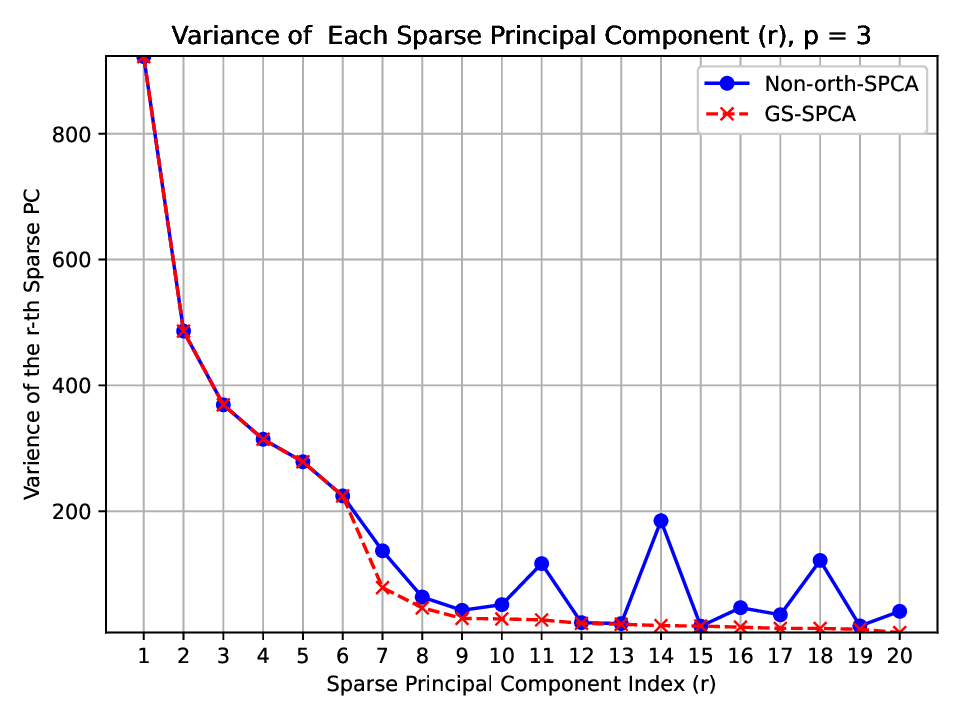}
        \label{fig:varience_k20_p3}
    }
    \subfloat[]{
        \includegraphics[width=0.3\textwidth]{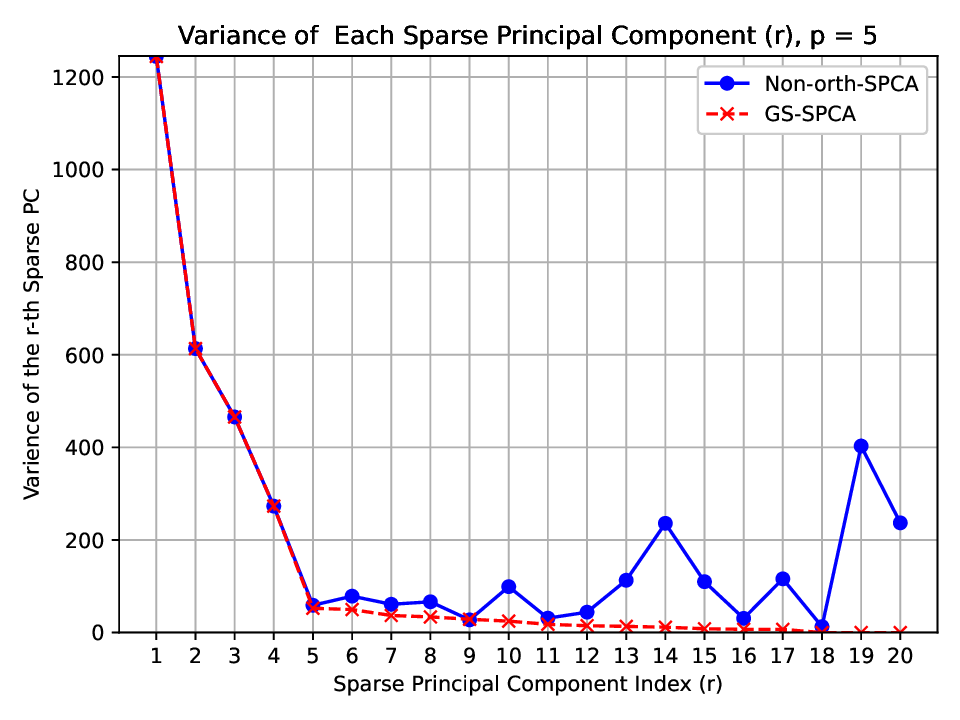}
        \label{fig:varience_k20_p5}
    }
    \subfloat[]{
        \includegraphics[width=0.3\textwidth]{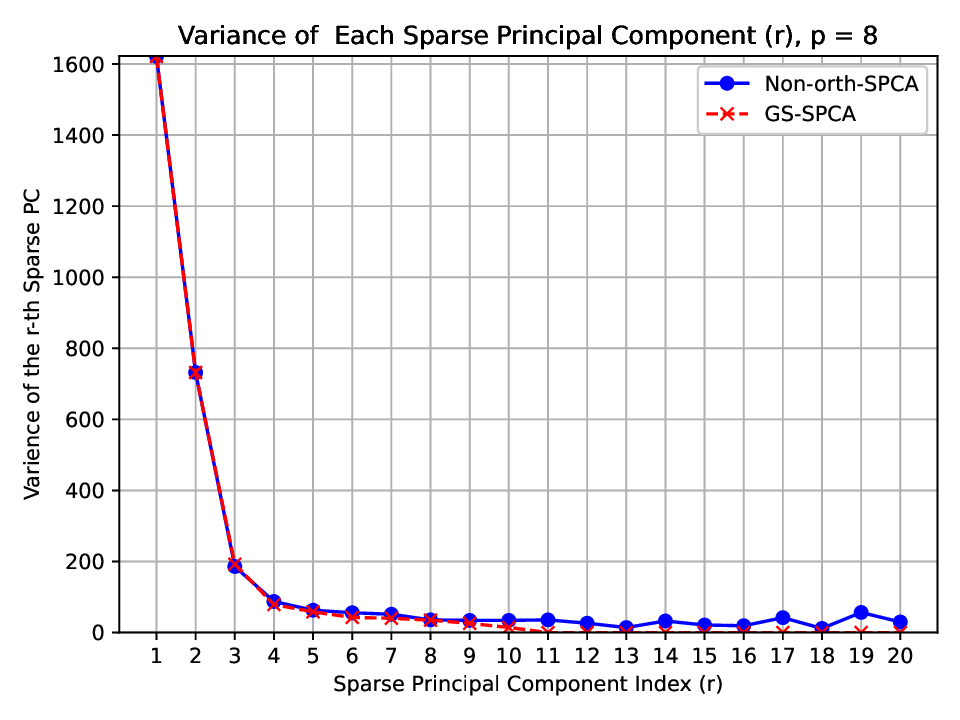}
        \label{fig:varience_k20_p8}
    }

    \caption{(a)-(c) Record the maximum angle between the first $r$ Sparse Principal Components; (d)-(f) Record the time to compute the $r$-th Sparse Principal Component for different sparsities, where each point represents the time to compute the $r$-th Sparse Principal Component; (g)-(i) Record the time for solving the first $r$ Sparse Principal Components as the sparsity changes; (j)-(l) Record the variance of the $r$-th Sparse Principal Component for different sparsities, where each point represents the variance of the $r$-th Sparse Principal Component.}
    \label{fig:comparison}
\end{figure*}
\paragraph{Results.}Figures \ref{fig:comparison} (a)-(c) show that, under different sparsities, the maximum angle between the first $r$ sparse principal components of the non-orthogonal SPCA algorithm increases as $r$ increases. This confirms the conclusion that the algorithm cannot guarantee orthogonality, as evidenced by the increasing angle between the sparse components.

Figures \ref{fig:comparison} (d)-(f) illustrate that, under different sparsities, the time required by our proposed GS-SPCA algorithm to compute the $r$-th sparse principal component increases with $r$. As $r$ increases, the number of vectors to be processed in the Gram-Schmidt orthogonalization also increases, leading to a longer time for the orthogonalization process. However, the increase is linear and remains within an acceptable range.

Figures \ref{fig:comparison} (g)-(i) show that when solving for the first $r$ sparse principal components, the computational time of the algorithm increases with increasing sparsity. In the GS-SPCA algorithm, this is due to the increase in the dimension of the vectors processed during Gram-Schmidt orthogonalization. Compared to the non-orthogonal algorithm, however, this increase is steady and acceptable.

Figures \ref{fig:comparison} (j)-(l) demonstrate that, under different sparsities, as $r$ increases, the variance of the $r$-th sparse principal component in the GS-SPCA algorithm gradually decreases to zero. In contrast, the non-orthogonal SPCA algorithm exhibits highly unstable variance decay due to the inability to guarantee orthogonality, making the variance reduction process much more erratic.
\section{Discussion for future works}\label{sec:future-work}
\textbf{Path Dependency of Variance in SPCA:} One of the fundamental differences between classical PCA and SPCA lies in the determinacy of the variance sequence $\{x_k^\top Q x_k\}_{k=1}^n$. In PCA, the variance $x_k^\top Q x_k$ equals the $k$-th largest eigenvalue of $Q$, a fixed value independent of the specific choice of preceding eigenvectors, even when eigenvalues have multiplicity.

In contrast, for SPCA, the variance $x_k^\top Q x_k$ depends on the choices of the preceding $k-1$ components. We illustrate this path dependency with a concrete example.

\textbf{Example:} Consider the following symmetric positive semi-definite covariance matrix
\[
Q = \begin{bmatrix}
5 & 1 & 0 \\
1 & 5 & 2 \\
0 & 2 & 2
\end{bmatrix},
\]
with sparsity parameter $p = 2$. The first sparse principal component problem \eqref{eq:spca1} has two different optimal solutions:
\[
x_1^{(a)} = \begin{bmatrix} \frac{1}{\sqrt{2}} & \frac{1}{\sqrt{2}} & 0 \end{bmatrix}^\top, \qquad
x_1^{(b)} = \begin{bmatrix} 0 & \frac{2}{\sqrt{5}} & \frac{1}{\sqrt{5}} \end{bmatrix}^\top,
\]
both achieving variance $6$. Depending on which $x_1$ is chosen, the subsequent orthogonal sparse principal components differ:

\begin{itemize}
    \item \textbf{Path A}: Choosing $x_1 = x_1^{(a)}$ yields the sequence
    \[
    \begin{aligned}
    x_1^{(a)} = \begin{bmatrix} \frac{1}{\sqrt{2}} & \frac{1}{\sqrt{2}} & 0 \end{bmatrix}^\top&: \text{variance} = 6, \\
    x_2^{(a)} = \begin{bmatrix} \frac{1}{\sqrt{2}} & -\frac{1}{\sqrt{2}} & 0 \end{bmatrix}^\top &: \text{variance} = 4, \\
    x_3^{(a)} = \begin{bmatrix} 0 & 0 & 1 \end{bmatrix}^\top &: \text{variance} = 2.
    \end{aligned}
    \]
    \item \textbf{Path B}: Choosing $x_1 = x_1^{(b)}$ yields the sequence
    \[
    \begin{aligned}
    x_1^{(b)} = \begin{bmatrix} 0 & \frac{2}{\sqrt{5}} & \frac{1}{\sqrt{5}} \end{bmatrix}^\top &: \text{variance} = 6, \\
    x_2^{(b)} = \begin{bmatrix} 1 & 0 & 0 \end{bmatrix}^\top &: \text{variance} = 5, \\
    x_3^{(b)} = \begin{bmatrix} 0 & \frac{1}{\sqrt{5}} & -\frac{2}{\sqrt{5}} \end{bmatrix}^\top &: \text{variance} = 1.
    \end{aligned}
    \]
\end{itemize}

Thus, both $\{x_1^{(a)},x_2^{(a)},x_3^{(a)}\}$ and $\{x_1^{(b)},x_2^{(b)},x_3^{(b)}\}$ are valid solutions to the SPCA problem under Definition \ref{def:ospca}, yielding different variance sequences $\{6,4,2\}$ and $\{6,5,1\}$. This directly demonstrates that the variance of the $k$-th sparse component is not predetermined but depends on choices of earlier $k-1$ components.
However, the sum of variances is always equal to $\tr(Q)$ as asserted by Proposition \ref{prop:tr_spca} (e.g., $\tr(Q)=12$ in this example). Thus, although sparsity allows the variance to be distributed flexibly among components, the total variance remains fixed.

In our proposed SPCA algorithm, the components are computed consecutively. In light of the
\textbf{Path Dependency of Variance in SPCA} illustrated above, this sequential design reveals
an important limitation: even if each step produces an optimal sparse component for the
corresponding single-component subproblem, different admissible choices of the early components
may induce \emph{different variance allocation paths} (variance sequences), and the resulting set
of sparse orthogonal components need not be \emph{jointly} optimal for the multivariate objective.
In other words, \emph{stage-wise optimality does not in general imply joint optimality}: when the
leading sparse component is non-unique, the subsequent feasible region can
change substantially, and the path dependency may accumulate into a non-negligible global
suboptimality, even though the total explained variance remains fixed at $\tr(Q)$.

Looking ahead, a key and highly impactful direction for future research is to extend our framework beyond the sequential paradigm and develop methods that target (exactly or approximately) the jointly optimal multivariate SPCA problem. This refers to the scenario where, in cases of two principal components with identical variances, we must select the solution corresponding to the second sparse principal component with the larger variance, i.e., how to identify path B in above Example. Addressing this challenge is crucial
both theoretically and practically: in many high-dimensional applications, what ultimately matters
is the \emph{collective} representational power and stability of the top $r$ sparse components, not
merely the variance captured by the first one. Consequently, designing scalable solvers for jointly
optimal SPCA is an important avenues for future work. Progress
along these lines would substantially strengthen the reliability and reproducibility of SPCA in
multi-component settings, and push the performance ceiling of sparse, interpretable principal
subspace estimation.
%\cite{journee2010generalized}

% From an implementation perspective, there also remains substantial room for engineering and acceleration.blablalblablalbla

\bibliography{example_paper}
\bibliographystyle{unsrt}

\newpage
\appendix
\onecolumn

\section{Formal Proof of Decomposition Theorems}

\subsection{Proof of Theorem \ref{thm:block-decomp}}
\label{app:proof-decop}
\begin{proof}
We need to verify that the constructed sequence $\{z_k\}_{k=1}^n$ meets all conditions of Definition \ref{def:ospca} for $(Q,p)$.

\textbf{Orthonormality and sparsity:} 
For each $1\le i\le d$, the vectors $\{y_j^{(Q_i)}\}_{j=1}^{n_i}$ form an orthonormal basis of $\mathbb{R}^{n_i}$ by Definition \ref{def:ospca}. Since the extended vectors $\{y_j^{(\W_i)}\}_{j=1}^{n_i}$ belong to mutually orthogonal subspaces $\W_i$ for different $i$, all of vectors $\bigcup_{i=1}^d \{y_j^{(\W_i)}\}_{j=1}^{n_i}$ are orthonormal in $\mathbb{R}^n$. Sorting does not affect orthonormality and sparsity, so each $z_k$ satisfies $\|z_k\|_2 = 1$ and $\|z_k\|_0 \leq \max_i p_i \leq p$.

\textbf{Optimality of the first component:}
For any feasible $z \in \mathbb{R}^n$ with $\|z\|_2=1$ and $\|z\|_0 \le p$, decompose it as $z = \sum_{i=1}^d z^{(\W_i)}$ where $z^{(\W_i)} \in \W_i$. Then
\begin{align*}
z^\top Q z &= \sum_{i=1}^d (z^{(\W_i)})^\top Q z^{(\W_i)} = \sum_{i=1}^d (z^{(Q_i)})^\top Q_i z^{(Q_i)} \\
&\le \sum_{i=1}^d \left( \|z^{(Q_i)}\|_2^2 \max_{
\|y\|_2=1,~\|y\|_0\le p_i} y^\top Q_i y \right)\\
&= \sum_{i=1}^d \left( \|z^{(Q_i)}\|_2^2 \cdot (y_1^{(Q_i)})^\top Q_i y_1^{(Q_i)} \right)
\end{align*}
Note that for all $1\le i\le d$,
\[
(y_1^{(Q_i)})^\top Q_i y_1^{(Q_i)} =(y_1^{(\W_i)})^\top Q y_1^{(\W_i)} \le z_1^\top Q z_1,
\]
and the Parseval identity
\[
\sum_{i=1}^d \|z^{(Q_i)}\|_2^2 = \|z\|_2^2 = 1,
\]
we have $z^\top Q z \le z_1^\top Q z_1$. Hence, $z_1$ is a valid first sparse principal component for $Q$.

\textbf{Inductive step for subsequent components:}
Assume by induction that $z_1, \dots, z_{k-1}$ satisfy Definition \ref{def:ospca}.
For each block $i$, let $m_i$ denote the number of vectors among $z_1, \dots, z_{k-1}$ that lie in the subspace $\W_i$:
\[
m_i:=| \{\ell:1 \le \ell \le k-1,\;z_\ell\in\W_i\} |.
\]
Thus, the set $\{z_1, \dots, z_{k-1}\}$ can be represented in the form of the following disjoint union:
\[
\{z_1, \dots, z_{k-1}\}= \bigsqcup_{i=1}^d \{y_1^{(\W_i)},\dots,y_{m_i}^{(\W_i)}\}
\]

Let $z \in \mathbb{R}^n$ be any feasible vector for the $k$-th SPCA subproblem with the original $(Q,p)$ (i.e., $\|z\|_2 = 1$, $\|z\|_0 \le p$, and $z \perp z_1, \dots, z_{k-1}$).
Decompose $z$ as $z = \sum_{i=1}^d z^{(\W_i)}$ with $z^{(\W_i)} \in \W_i$. Then for all $1\le i\le d$ and $1\le\ell\le m_i$,
\[
0=\langle z, y_\ell^{(\W_i)}\rangle = \langle z^{(\W_i)}, y_\ell^{(\W_i)}\rangle = \langle z^{(Q_i)}, y_\ell^{(Q_i)}\rangle,
\]
thus, we have
\begin{align*}
z^\top Q z &= \sum_{i=1}^d (z^{(\W_i)})^\top Q z^{(\W_i)} = \sum_{i=1}^d (z^{(Q_i)})^\top Q_i z^{(Q_i)} \\
&\le \sum_{i=1}^d \left( \|z^{(Q_i)}\|_2^2 \max_{
\begin{aligned}
\|y\|_2=1,~\|y\|_0\le p_i,\\
y \perp y_1^{(Q_i)},\dots,y_{m_i}^{(Q_i)}
\end{aligned}} 
y^\top Q_i y 
\right)\\
&\le \sum_{i=1}^d \left( \|z^{(Q_i)}\|_2^2 \cdot (y^{(Q_i)}_{m_i+1})^\top Q_i y^{(Q_i)}_{m_i+1}\right) \\
&\le \sum_{i=1}^d \left( \|z^{(Q_i)}\|_2^2 \cdot z_k^\top Q z_k\right) = z_k^\top Q z_k
\end{align*}
Therefore, $z_k$ is a solution for the $k$-th original SPCA subproblem with $(Q,p)$. By induction, the entire sequence satisfies Definition \ref{def:ospca}, completing the proof.
\end{proof}

\subsection{Proof of Theorem \ref{thm:eps-block-decomp}}
\label{app:proof-decop-eps}
\begin{proof}
We follow a similar structure to the proof of Theorem \ref{thm:block-decomp}, focusing on the changes due to the $\eps$-optimality.

\textbf{Orthonormality and sparsity:} As before, the extended vectors form an orthonormal system, and each $\widetilde{z}_k$ satisfies $\|\widetilde{z}_k\|_2=1$ and $\|\widetilde{z}_k\|_0 \le p$.

\textbf{Optimality of the first component:}
Let 
$$v_1^* = \max_{\|x\|_2=1,\; \|x\|_0\le p} x^\top Q x$$
be the optimal value for the first sparse principal component of $Q$. For any block $i$, let 
$$v_{1,i}^* = \max_{\|x\|_2=1,\; \|x\|_0\le p_i} x^\top Q_i x.$$
Let $i^* = \arg\max_i v_{1,i}^*$, we have $v_1^* = v_{1,i^*}$ by Theorem \ref{thm:block-decomp}.
Since $\{\widetilde{y}_j^{(Q_{i^*})}\}$ is an $\eps$-optimal solution for $Q_{i^*}$, we have
\[
(\widetilde{y}_1^{(Q_{i^*})})^\top Q_{i^*} \widetilde{y}_1^{(Q_{i^*})} \ge v_{1,i^*}^* - \eps = v_1^* - \eps.
\]
Moreover, $\widetilde{z}_1$ is the extended vector with maximum variance among all $\widetilde{y}_1^{(\W_i)}$, so
\[
\widetilde{z}_1^\top Q \widetilde{z}_1 \ge (\widetilde{y}_1^{(Q_{i^*})})^\top Q_{i^*} \widetilde{y}_1^{(Q_{i^*})} \ge v_1^* - \eps.
\]

\textbf{Inductive step:}
Assume by induction that for $k \ge 2$, the vectors $\widetilde{z}_1, \dots, \widetilde{z}_{k-1}$ satisfy the $\eps$-optimality condition up to step $k-1$.

Let
\[ v_k^* =\max_{\substack{\|x\|_2=1,~\|x\|_0 \le p \\ x \perp \widetilde{z}_1, \dots, \widetilde{z}_{k-1}}} x^\top Q x.
\]
For each block $i$, let $m_i$ be the number of vectors among $\widetilde{z}_1,\dots,\widetilde{z}_{k-1}$ that lie in $\W_i$. Define
\[ v_{k,i}^* =\max_{\substack{\|x\|_2=1,~\|x\|_0 \le p \\ x \perp \widetilde{y}_1^{(Q_i)}, \dots, \widetilde{y}_{m_i}^{(Q_i)}}} x^\top Q_i x.
\]
Consider the block-wise decomposition as in Theorem \ref{thm:block-decomp}, we have $v_k^* = \max_i v_{k,i}^* = v_{k,i^*}$ for some block $i^*$.
Since $\{\widetilde{y}_j^{(Q_{i^*})}\}_{j=1}^{n
_{i^*}}$ is an $\eps$-optimal solution for $Q_{i^*}$, we have
\[
(\widetilde{y}_{m_{i^*}+1}^{(Q_{i^*})})^\top Q_{i^*} \widetilde{y}_{m_{i^*}+1}^{(Q_{i^*})} \ge v_{k,i^*}^* - \eps = v_k^* - \eps.
\]
Therefore,
\[
\widetilde{z}_k^\top Q \widetilde{z}_k \ge (\widetilde{y}_{m_{i^*}+1}^{(Q_{i^*})})^\top Q_{i^*} \widetilde{y}_{m_{i^*}+1}^{(Q_{i^*})} \ge v_k^* - \eps.
\]

This completes the induction, showing that $\{\widetilde{z}_1, \dots, \widetilde{z}_n\}$ is an $\eps$-optimal solution.
\end{proof}

\subsection{Proof of Theorem \ref{thm:threshold-decop}}
\label{app:proof-approx}.
\begin{proof}
Let
\[
u_k = \Pi^\top \widetilde{z}_k,\;\;\forall 1\le k\le n.
\]
By Definition \ref{def:eps-spca}, for each $k=1,\dots,n$,
\begin{align*}
    \widetilde{z}_k^\top A \widetilde{z}_k &\ge \max_{\substack{\|z\|_2=1,~\|z\|_0 \le p \\ z \perp \widetilde{z}_1, \dots, \widetilde{z}_{k-1}}} z^\top A z - \eps \\
    &= \max_{\substack{\|\Pi^\top z\|_2=1,~\|\Pi^\top z\|_0 \le p \\ z \perp \widetilde{z}_1, \dots, \widetilde{z}_{k-1}}} (\Pi^\top z)^\top Q^\delta (\Pi^\top z) - \eps \\
    &= \max_{\substack{\|u\|_2=1,~\|u\|_0 \le p \\ u \perp u_1, \dots, u_{k-1}}} u^\top Q^\delta u - \eps.
\end{align*}
Since $\|Q - Q^\delta\|_{\max} \leq \delta$ and the sparsity constraint $\|u\|_0\le p$, by Cauchy-Schwarz inequality, we have for any $p$-sparse unit vector $u$:
\[
|u^\top (Q-Q^\delta) u| \le p\delta.
\]
Therefore, for each $k=1,\dots,n$,
\[
\widetilde{z}_k^\top A \widetilde{z}_k \ge \max_{\substack{\|u\|_2=1,~\|u\|_0 \le p \\ u \perp u_1, \dots, u_{k-1}}} u^\top Q u - p\delta - \eps.
\]
Together with $
u_k^\top Q u_k + p\delta \ge u_k^\top Q^\delta u_k = \widetilde{z}_k^\top A \widetilde{z}_k$,
we have
\[
u_k^\top Q u_k \ge \max_{\substack{\|u\|_2=1,~\|u\|_0 \le p \\ u \perp u_1, \dots, u_{k-1}}} u^\top Q u - (2p\delta + \eps),
\]
which means $\{u_k\}_{k=1}^{n}$ is a $(2p\delta+\eps)$-optimal solution to the SPCA problem for original $(Q,p)$.
\end{proof}

Note that the threshold matrix $A$ (and $Q^\delta$) may not be positive semi-definite. However, the SPCA problem (Definition \ref{def:ospca}) and the $\eps$-optimality criterion (Definition \ref{def:eps-spca}) are well-defined for any symmetric matrix, as the maximization is over a compact set. The proof only uses the symmetry of $A$ and the entrywise bound $\|Q-Q^\delta\|_{\max}\leq\delta$, not positive semi-definiteness. Therefore, the theorem statement and proof remain valid without requiring $A$ to be positive semi-definite.

\section{Accelerative algorithm for an $\eps$-optimal solution of SPCA problem} \label{app:branch-and-bound}

As introduced in \cite{berk2019certifiably}, we consider a family of subproblems called SPCA with Partially Determined Support (SPCA-PDS). Each such problem is defined by lower and upper bound vectors $\mathbf{l}, \mathbf{u} \in \{0,1\}^n$ that restrict the support vector $y$ in the $k$-th SPCA-MIO formulation \eqref{eq:spca-mio}:
\begin{equation}
\label{eq:spca-pds}
\begin{aligned}
\max \quad & x^\top Q x, \\
\mathrm{s.t.}  \quad & x \perp x_1,\dots,x_{k-1},\\
& \|x\|_2 = 1, \\
& \|y\|_0 = p,\\
& -y[j]\le x[j] \le y[j],\;\; 1\le j\le n,\;\;\;\; \text{(SPCA-PDS)}\\
& x\in[-1,1]^n,\\
& y\in\{0,1\}^n,\\
& \mathbf{l}\le y \le \mathbf{u},\;\; \mathbf{l}, \mathbf{u}\in\{0,1\}^n.
\end{aligned}
\end{equation}
Let $X_k = [x_1,\dots,x_{k-1}]$. The set of feasible vectors $x$ for problem \eqref{eq:spca-pds} is denoted by
\[
\mathbb{X}_k(\mathbf{l},\mathbf{u},p):=\{x\in\mathbb{R}^n: X_k^\top x=0,\ \|x\|_2=1,\ \|x\|_0 \le p,\ \mathbf{l} \le x \le \mathbf{u}\}.
\]

The branch-and-bound algorithm (Algorithm \ref{alg:branch-and-bound}) relies on two key functions that compute lower and upper bounds for SPCA-PDS:
\begin{align}
\text{lower\_bound}(Q, p, X_k, \mathbf{l}, \mathbf{u}) &=  \left(x^\top Q x \mathrm{~for~some~}x\in\mathbb{X}_k(\mathbf{l},\mathbf{u},p)\right) \label{eq:low-bound}\\
\text{upper\_bound}(Q, p, X_k, \mathbf{l}, \mathbf{u}) &\ge  \left(x^\top Q x \mathrm{~for~all~}x\in\mathbb{X}_k(\mathbf{l},\mathbf{u},p)\right).\label{eq:up-bound}
\end{align}
These bounds \eqref{eq:low-bound} and \eqref{eq:up-bound} guide the search by pruning nodes whose upper bound does not exceed the current best lower bound by at least the tolerance $\epsilon \ge 0$.

\begin{algorithm}[t]
\caption{Branch-and-Bound Algorithm for the $k$-th SPCA Problem}
\label{alg:branch-and-bound}
\begin{algorithmic}[1]
\REQUIRE Symmetric matrix $Q \in \mathbb{R}^{n \times n}$, sparsity $p$, a matrix formed by previous components $X_k=[x_1,\dots,x_{k-1}]$ (empty if $k=1$), optimality tolerance $\epsilon \geq 0$.
\ENSURE A $k$-th sparse principal component $x_k$ that is globally optimal within tolerance $\epsilon$.

\STATE Initialize root node $n_0 = (\mathbf{l}, \mathbf{u}) = (\{0\}^n, \{1\}^n)$
\STATE Initialize node set $\mathcal{N} = \{n_0\}$
\STATE Initialize $x_k^* = \{0\}^n$, 
%\STATE Compute $ub(n_0) = \text{upper\_bound}(Q, p, X, \mathbf{l}, \mathbf{u})$
\STATE Set $LB = -\infty$, $UB = \lambda_{\max}((I-X_kX_k^\top)Q(I-X_kX_k^\top))$

\WHILE{$UB - LB > \epsilon$}
    \STATE Select a node $(\mathbf{l}, \mathbf{u}) \in \mathcal{N}$ %(e.g., with maximal $ub$)
    \STATE select some index $i$ where $\mathbf{l}[i]=0$, $\mathbf{u}[i]=1$
    \FOR{$val = 0, 1$}
        \STATE Create new node: 
        \STATE \quad $\mathbf{l}' \gets (l_1, \dots, l_{i-1}, val, l_{i+1}, \dots, l_n),\;\;\mathbf{u}' \gets (u_1, \dots, u_{i-1}, val, u_{i+1}, \dots, u_n)$
        \IF{$\|\mathbf{l}'\|_0 > p$ \OR $\|\mathbf{u}'\|_0 < p$}
            \STATE \textbf{continue} \COMMENT{Prune by sparsity constraint}
        \ENDIF
        \IF{$\|\mathbf{l}'\|_0  = p$ \OR $\|\mathbf{u}'\|_0=p$}
            \STATE Compute the optimal solution $x$ to the problem at $(\mathbf{l}',\mathbf{u}')$ with the determined support set $Y$
            \STATE Set $lb=ub=\lambda_{\max}(P_YQ_YP_Y)$, where the matrix $P_Y$ can be obtained by Line 3--5 in Algorithm \ref{alg:ksparse}.
        \ELSE 
        \STATE Compute $lb = \text{lower\_bound}(Q, p, X_k, \mathbf{l}', \mathbf{u}')$, with corresponding vector $x$  in the feasibility set $\mathbb{X}_k(\mathbf{l}',\mathbf{u}',p)$
        \STATE Compute $ub = \text{upper\_bound}(Q, p, X_k, \mathbf{l}', \mathbf{u}')$
        \ENDIF
        \IF{$lb > LB$}
            \STATE Update $LB \gets lb$, $x_k^* \gets x$
            \STATE Remove any node with $up \le LB$ from $\mathcal{N}$
        \ENDIF
        \IF{$ub > LB$}
            \STATE Add $(\mathbf{l}', \mathbf{u}')$ to $\mathcal{N}$
        \ENDIF
    \ENDFOR
    
    \STATE Remove $(\mathbf{l}, \mathbf{u})$ from $\mathcal{N}$
    \STATE Update $UB$ to be the greatest value of $ub$ over $\mathcal{N}$
\ENDWHILE

\STATE \textbf{Return} $x_k^*$
\end{algorithmic}
\end{algorithm}

It should be noted that the upper bound function $upper(\mathbf{l},\mathbf{u},p)$ of \cite{berk2019certifiably} is defined as
\[
upper(\mathbf{l},\mathbf{u},p) \ge \left(x^\top Q_k x \mathrm{~for~all~}x\in\widetilde{\mathbb{X}}(\mathbf{l},\mathbf{u},p)\right),
\]
with adjusted matrix $Q_k=(I-X_kX_k^\top)Q(I-X_kX_k^\top)$ and feasible set 
\[
\widetilde{\mathbb{X}}(\mathbf{l},\mathbf{u},p)=\{x\in\mathbb{R}^n: \|x\|_2=1,\ \|x\|_0 \le p,\ \mathbf{l} \le x \le \mathbf{u}\}.
\]
Since $\mathbb{X}_k(\mathbf{l},\mathbf{u},p) \subseteq \widetilde{\mathbb{X}}(\mathbf{l},\mathbf{u},p)$, we have
\begin{equation*}
upper(\mathbf{l},\mathbf{u},p) \ge \left(x^\top Q_k x \mathrm{~for~all~}x\in\widetilde{\mathbb{X}}(\mathbf{l},\mathbf{u},p)\right) 
\ge \left(x^\top Q_k x \mathrm{~for~all~}x\in\mathbb{X}_k(\mathbf{l},\mathbf{u},p)\right)
= \left(x^\top Q x \mathrm{~for~all~}x\in\mathbb{X}_k(\mathbf{l},\mathbf{u},p)\right).
\end{equation*}
Thus the function $upper(\mathbf{l},\mathbf{u},p)$ from \cite{berk2019certifiably} also serves as a valid $\text{upper\_bound}(Q, p, X_k, \mathbf{l}, \mathbf{u})$ in our algorithm.

For the lower bound, it suffices to produce any feasible vector $x\in\mathbb{X}_k(\mathbf{l},\mathbf{u},p)$. We can first obtain a feasible vector $x\in\widetilde{\mathbb{X}}(\mathbf{l},\mathbf{u},p)$ using the same method as in \cite{berk2019certifiably}, then generate another vector $\hat{x}\in\mathbb{X}_k(\mathbf{l},\mathbf{u},p)$ close to $x$. Specifically, let $Y = \operatorname{supp}(x)$ be the support of $x$. We then restrict the previous $k-1$ components to $Y$, obtaining $v_i = x_i[Y]$ for $i=1,\dots,k-1$. Next, we compute an orthonormal basis $U_Y$ for the subspace spanned by $\{v_1,\dots,v_{k-1}\}$ via the Gram--Schmidt process. Finally, we project $x[Y]$ onto the orthogonal complement of this subspace: $z = (I - U_YU_Y^\top) x[Y]$, normalize it to unit vector $\hat{z} = z/\|z\|_2$, and extend $\hat{z}$ to an $n$-dimensional vector by setting entries outside $Y$ to zero. The resulting vector, denoted by $\hat{x}$, satisfies $\hat{x} \perp x_1,\dots,x_{k-1}$, $\|\hat{x}\|_2=1$ and $\|\hat{x}\|_0 \le p$, hence $\hat{x} \in \mathbb{X}_k(\mathbf{l},\mathbf{u},p)$. The variance $\hat{x}^\top Q \hat{x}$ then provides a valid lower bound.

\begin{theorem}[Correctness of Algorithm~\ref{alg:branch-and-bound}] Algorithm~\ref{alg:branch-and-bound} terminates in finitely
many iterations at an $\eps$-optimal solution for the $k$-th SPCA-MIO problem \eqref{eq:spca-mio}.
\end{theorem}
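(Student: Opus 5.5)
We prove two things: finite termination, and the fact that the returned $x_k^*$ satisfies $(x_k^*)^\top Q x_k^* \ge v_k^* - \eps$, where $v_k^*$ is the optimal value of \eqref{eq:spca-mio} (equivalently \eqref{eq:ospca}).

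\textbf{Termination.} Each iteration of the while loop selects a node $(\mathbf{l},\mathbf{u})$ and an index $i$ with $\mathbf{l}[i]=0,\ \mathbf{u}[i]=1$. It replaces the node by at most two children, each with strictly more fixed coordinates, and removes the parent from $\mathcal{N}$. So no node is processed twice. A node in which every coordinate is fixed, or in which $\|\mathbf{l}'\|_0=p$ or $\|\mathbf{u}'\|_0=p$, is a leaf whose support is determined. At such a leaf $lb=ub=\lambda_{\max}(P_YQ_YP_Y)$. The tree of possible nodes has depth at most $n$ and hence at most $2^{n+1}$ nodes, so after finitely many iterations $\mathcal{N}$ is either empty or contains only leaves. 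We set $UB=-\infty$ (or $UB=LB$) when $\mathcal{N}$ becomes empty. At a leaf that survives in $\mathcal{N}$ we have $ub=lb\le LB$, since any $lb$ exceeding $LB$ immediately updates $LB$. In either case $UB-LB\le 0\le\eps$ and the loop exits. This requires that a node always contains a branchable index. We will note that a node with no free index has $\|\mathbf{l}\|_0=\|\mathbf{u}\|_0$, so by the sparsity test it equals $p$ and the node was treated as a leaf when it was created.

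\textbf{Invariant (main step).} We maintain the following throughout the run:
\[
LB = (x_k^*)^\top Q x_k^* \text{ with } x_k^*\in\mathbb{X}_k(\{0\}^n,\{1\}^n,p),\qquad v_k^* \le \max\{UB,\, LB\}.
\]
The first part holds because $LB$ is updated only through \eqref{eq:low-bound}, or through a leaf eigenvector. In the leaf case the vector is $z$ zero-padded outside $Y$, and it is feasible by the argument in the proof of correctness of Algorithm~\ref{alg:ksparse}. For the second part, we fix an optimal $x^\star$ with support $Y^\star$ and the $y^\star$ representing it. At the root, $UB=\lambda_{\max}((I-X_kX_k^\top)Q(I-X_kX_k^\top))\ge v_k^*$, because $X_k^\top x^\star=0$. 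We then check that the node set always either contains a node whose box $[\mathbf{l},\mathbf{u}]$ contains $y^\star$, or that a node containing it has previously been discarded. A node is discarded in three ways:
\begin{itemize}
\item pruning by sparsity, which never removes $y^\star$ because $\|y^\star\|_0=p$;
\item being a leaf, in which case $lb=\lambda_{\max}(P_YQ_YP_Y)\ge v_k^*$ restricted to that support, and this is absorbed into $LB$;
\item having $ub\le LB$, in which case $LB\ge ub\ge v_k^*$ by \eqref{eq:up-bound}.
\end{itemize}
Hence either $LB\ge v_k^*$ already holds, or some live node has $ub\ge v_k^*$, which gives $UB\ge v_k^*$. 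The delicate point is the leaf case with $\|\mathbf{u}'\|_0=p$ or $\|\mathbf{l}'\|_0=p$. There the support is forced to be $Y=\operatorname{supp}\mathbf{u}'$ or $\operatorname{supp}\mathbf{l}'$ respectively, and we verify that the exact reduced problem \eqref{eq:reduced-pca} on $Y$ dominates every $x$ in that node's box. This is where we expect the most care to be needed, along with the bookkeeping that $UB$ is recomputed over $\mathcal{N}$ only after children are added.

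\textbf{Conclusion.} At exit we have $UB-LB\le\eps$, so $v_k^*\le\max\{UB,LB\}\le LB+\eps=(x_k^*)^\top Qx_k^*+\eps$. Since $x_k^*$ is feasible, it is an $\eps$-optimal solution of \eqref{eq:spca-mio}.
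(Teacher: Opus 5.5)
Your overall structure (finiteness of the branching tree, validity of $ub$ on every box containing a fixed optimal support $y^\star$, and the invariant $v_k^*\le\max\{UB,LB\}$) is the standard branch-and-bound argument that the paper invokes by citing Theorem~1 of Berk--Bertsimas. You also correctly add the ingredients specific to the orthogonal setting: the root bound and the forced support at leaves.

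\textbf{The gap.} The step that fails is the first half of your invariant in the leaf case. The proof of correctness of Algorithm~\ref{alg:ksparse} gives only the upper-bound direction: every feasible $x$ supported in $Y$ has $x^\top Qx\le\lambda_{\max}(P_YQ_YP_Y)$. It does not show that a top unit eigenvector $z$ of $P_YQ_YP_Y$ is orthogonal to the $v_i$, and in degenerate cases it is not.

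Take $n=2$, $p=1$, $Q=\diag(1,0)$, $k=2$, $x_1=e_1$, so $UB=0$. Suppose the algorithm branches on index $2$.
\begin{itemize}
\item The child with $\mathbf{u}'=(1,0)$ is a leaf with $Y=\{1\}$, $v_1=(1)$, $P_Y=0$, $\lambda_{\max}=0$ and $z=(1)$. This sets $LB\gets 0$ and $x_k^*\gets e_1$.
\item The other child (leaf $Y=\{2\}$) has $lb=0$, which does not exceed $LB$, so no update occurs.
\item $\mathcal{N}$ then empties, and the algorithm returns $e_1$.
\end{itemize}
The returned $e_1$ is not orthogonal to $x_1$, and $e_1^\top Qe_1=1\neq LB$. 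So both parts of ``$LB=(x_k^*)^\top Qx_k^*$ with $x_k^*$ feasible'' fail. The same instance also defeats Algorithm~\ref{alg:ksparse} when $Y=\{1\}$ is enumerated first, contrary to the paper's remark on the case $m=p$. The delicate point at leaves is therefore attainment, not the domination you flag.

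\textbf{What you can prove, and the fix.} If $\lambda=\lambda_{\max}(P_YQ_YP_Y)>0$, every eigenvector for $\lambda$ satisfies $z=\lambda^{-1}P_Y(Q_YP_Yz)\in\operatorname{range}(P_Y)$. Hence $U_Y^\top z=0$, $z\perp v_i$ for all $i$, and $z^\top Q_Yz=z^\top P_YQ_YP_Yz=\lambda$.

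The case $\lambda=0$ with $m\ge 1$ needs an explicit convention in the algorithm. For example, let the columns of $W$ be an orthonormal basis of $\operatorname{range}(P_Y)$, and:
\begin{itemize}
\item take $\lambda_{\max}(W^\top Q_YW)$ as the leaf value, with the zero-padded $Wz'$ for a top eigenvector $z'$ as the leaf vector;
\item set $lb=ub=-\infty$ when $m=p$.
\end{itemize}
This value still dominates the leaf's box, since feasible points have $x[Y]\in\operatorname{range}(P_Y)$. It coincides with $\lambda_{\max}(P_YQ_YP_Y)$ when $Q\succeq 0$ and $m<p$. It also covers indefinite inputs such as the thresholded blocks $A_i$, where $\lambda_{\max}(P_YQ_YP_Y)=0$ can exceed every feasible value on $Y$.

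With that amendment, your termination argument, the three discard cases, and the conclusion $v_k^*\le LB+\eps$ go through as written. The paper's bare citation does not address this point either, since Berk--Bertsimas have no orthogonality constraint.
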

\begin{proof}
The proof is the same as Theorem 1 of \cite{berk2019certifiably}.
\end{proof}

\end{document}